\documentclass[10pt,twocolumn,letterpaper]{article}

\usepackage{cvpr}              

\usepackage{algorithm}
\usepackage{algpseudocode}
\usepackage{amsfonts}
\usepackage{amsmath}
\usepackage{amssymb}
\usepackage{amsthm}
\usepackage[accsupp]{axessibility}
\usepackage{bm}
\usepackage{multirow}
\usepackage{nameref}
\usepackage{subcaption}
\usepackage{thmtools}

\newtheorem{asm}{Assumption}
\newtheorem{prop}{Proposition}
\newtheorem{thm}{Theorem}

\def\eqref#1{equation~\ref{#1}}

\def\1{\bm{1}}

\def\vzero{{\bm{0}}}

\def\vtheta{{\bm{\theta}}}

\def\vb{{\bm{b}}}
\def\vc{{\bm{c}}}

\def\vu{{\bm{u}}}
\def\vv{{\bm{v}}}

\def\vx{{\bm{x}}}
\def\vy{{\bm{y}}}

\def\mA{{\bm{A}}}

\def\mF{{\bm{F}}}

\def\mH{{\bm{H}}}
\def\mI{{\bm{I}}}

\def\mS{{\bm{S}}}

\def\mU{{\bm{U}}}
\def\mV{{\bm{V}}}

\def\mX{{\bm{X}}}

\DeclareMathAlphabet{\mathsfit}{\encodingdefault}{\sfdefault}{m}{sl}
\SetMathAlphabet{\mathsfit}{bold}{\encodingdefault}{\sfdefault}{bx}{n}
\newcommand{\tens}[1]{\bm{\mathsfit{#1}}}

\def\tG{{\tens{G}}}

\def\gB{{\mathcal{B}}}

\def\gD{{\mathcal{D}}}

\def\gF{{\mathcal{F}}}

\def\gJ{{\mathcal{J}}}

\def\gL{{\mathcal{L}}}
\def\gM{{\mathcal{M}}}

\def\gO{{\mathcal{O}}}

\def\gR{{\mathcal{R}}}

\def\gV{{\mathcal{V}}}

\def\sC{{\mathbb{C}}}

\newcommand{\E}{\mathbb{E}}

\usepackage{bibunits}
\defaultbibliographystyle{ieeenat_fullname}

\definecolor{cvprblue}{rgb}{0.21,0.49,0.74}
\usepackage[breaklinks,colorlinks,allcolors=cvprblue]{hyperref}

\AtEndPreamble{%
  \crefname{asm}{Assumption}{Assumptions}%
  \Crefname{asm}{Assumption}{Assumptions}%
  \crefname{prop}{Proposition}{Propositions}%
  \Crefname{prop}{Proposition}{Propositions}%
  \crefname{thm}{Theorem}{Theorems}%
  \Crefname{thm}{Theorem}{Theorems}%
}

\def\paperID{34018} 
\def\confName{CVPR}
\def\confYear{2026}

\title{%
    Model Merging on Loss Landscape: A Geometry Perspective
} %

\author{
    Juanwu Lu$^{\dagger}$\thanks{Work done as an intern at Waymo LLC.} \quad
    Anand Bhaskar$^{\ddagger}$ \quad
    Brian Axelrod$^{\ddagger}$ \quad
    Ekaterina Tolstaya$^{\ddagger}$ \quad
    Tristan Emrich$^{\ddagger}$
    \\
    $^{\dagger}$~Purdue University\quad
    $^{\ddagger}$~Waymo LLC
    \\
    {\tt\small juanwu@purdue.edu, \{anand.bhaskar,baxelrod,eig,tristane\}@waymo.com}
}

\begin{document}

\maketitle

\begin{bibunit}
\begin{abstract}

    Model merging offers a promising avenue to knowledge integration and parallel development without retraining. Yet, existing methods either ignore the geometry of the loss landscape or rely on intractable full-space Hessian approximations. We propose \emph{EpiMer}, a framework that casts model merging as solving the Fr\'echet mean on a Riemannian manifold and restricts the computation to a low-rank subspace spanned by the task vectors. With the expected Hessian as the metric, we reveal a connection between local curvature and epistemic uncertainty of the parameters. Our theoretical analysis decomposes the merging error bound into the subspace Fr\'echet variance and the residual energy, and provides a closed-form characterization of \emph{when} curvature-aware merging provably outperforms flat-geometry methods. In addition, our framework unifies both curvature-aware methods and recent spectral methods as special cases of the subspace Fr\'echet mean with different geometric metrics. Merging fine-tuned CLIP-ViT models on eight image classification tasks, Epistemic Merging strictly outperforms the baselines on all three CLIP-ViT backbones at matched rank, improving the across-task average accuracy and worst-task accuracy on every backbone.


\end{abstract}

\section{Introduction}
\label{sec: intro}

With the increasing accessibility of pre-trained models, fine-tuning for adaptation to specialized tasks has become the \emph{de facto} standard for building powerful machine learning systems~\cite{NEURIPS2021_b36ed8a0, Wortsman_2022_CVPR, ding2023parameter}. As the sizes of task-specific models scale up, a new challenge emerges: \emph{consolidation of their diverse capabilities into a single, unified model for both computational and memory efficiency}. Recently, model merging has gained significant interest as a means to effectively and efficiently ensemble multiple models at inference time without costly retraining.

Existing merging strategies primarily operate in the flat, Euclidean parameter space. They seek an optimal merged model by directly solving for the linear average of task-specific models, leveraging auxiliary information such as performance rankings~\cite{pmlr-v162-wortsman22a}, task vectors~\cite{ilharco2023editingmodelstaskarithmetic}, and heuristic-based conflict resolution~\cite{NEURIPS2023_1644c9af}. While often effective, these methods share a fundamental limitation --- \emph{they ignore the geometry of the loss landscape}. As shown in~\cref{fig: illustration}, ignoring curvature in the loss landscape can cause a merged model to locate on global high-loss barriers, leading to destructive interference and potential catastrophic forgetting.

\begin{figure}[!t]
    \centering
    \includegraphics[width=\linewidth]{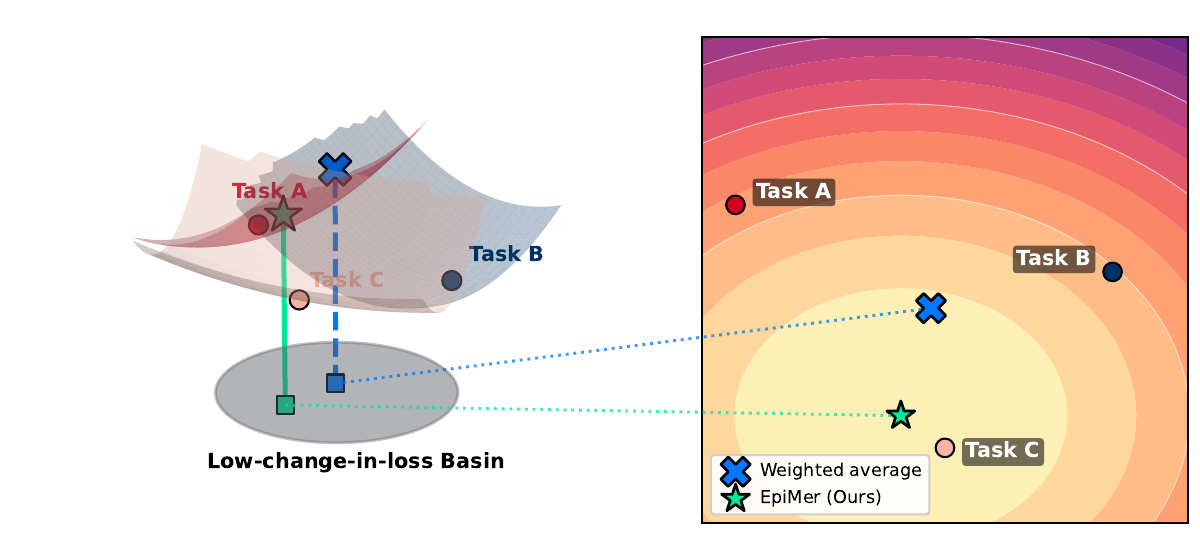}
    \caption{Illustration of the core idea. \textbf{Left:} Three task loss manifolds in a parameter frame anchored at the pre-trained parameters $\vtheta_{0}$. Each task has its minimum at a different location, and the overlap of their basins defines a joint low-loss region. The arithmetic weighted average of the task vectors (\textcolor[HTML]{0078FF}{blue} cross) lands where one manifold's loss remains catastrophically high, whereas our proposed merged model (\textcolor[HTML]{01BB7D}{green} star) sits inside the joint basin with a much lower worst-case loss. \textbf{Right:} The same three losses projected to the low-rank subspace $\mathcal{S}$ spanned by the task vectors $\{\bm{\delta}_{t}\mid{t=1,\ldots,3}\}$. The summed joint-loss heatmap places the weighted average in a high-loss region and our Fr\'echet mean in the central basin.}
    \label{fig: illustration}
    \vspace{-2em}
\end{figure}

Meanwhile, curvature-aware methods that incorporate second-order information~\cite{NEURIPS2022_70c26937, daheim2024modelmerginguncertaintybasedgradient} need to tackle a different challenge. \emph{Computing or approximating the Hessian in the full parameter space is either intractable or so noisy that it nullifies the theoretical advantage over simpler methods}. Recent spectral approaches~\cite{gargiulo2025tasksingularvectorsreducing, marczak2025taskleftbehindisotropic} bypass curvature entirely by operating on SVD subspaces of task vectors, achieving strong empirical results but without geometric justification. This raises a fundamental question: \emph{when does curvature actually matter in model merging, and when is flat-geometry merging sufficient?}

In this paper, we introduce \textbf{Epi}stemic \textbf{Mer}ging (\textbf{EpiMer}), a framework that aims to answer this question both theoretically and practically. We cast model merging as the problem of finding the Fr\'echet mean on a Riemannian landscape whose metric is induced by the expected Hessian of the task-specific losses. Crucially, we restrict the computation to the \emph{low-rank subspace} spanned by the task vectors, where the projected Hessian is sufficiently small to be evaluated exactly. This makes curvature-aware merging both principled and practical.

Our experiments, which merge CLIP Vision Transformer (CLIP-ViT) models~\cite{radford2021learningtransferablevisualmodels} fine-tuned on eight distinct image classification tasks, demonstrate the advantage of our approach against modern baselines. Furthermore, we propose a computable diagnostic method that exploits the curvature heterogeneity $\eta$ to predict \emph{a priori} whether curvature-aware merging will outperform flat-geometry methods. In short, our contributions in this paper are as follows:
\begin{itemize}[leftmargin=1.7em]
    \item [\textbf{C1}] Establish a \textbf{geometric framework} that unifies both curvature-aware methods and spectral methods as instances of the Fr\'echet mean under different subspace and metric choices.
    \item [\textbf{C2}] Conduct \textbf{theoretical analysis} that decomposes the error bound of merging failure into subspace variance and residual energy, and derives a closed-form characterization of when curvature-aware merging provably outperforms flat-geometry alternatives.
    \item [\textbf{C3}] Propose \textbf{Epistemic Merging}, a practical method with curvature-guided subspace selection, that strictly outperforms the strongest flat-geometry baseline TSV-M~\cite{gargiulo2025tasksingularvectorsreducing} on all three CLIP-ViT backbones at matched rank and improves worst-task accuracy.
\end{itemize}

\section{Model Merging}
\label{sec: preliminary}

This section provides a brief introduction to the problem of model merging and our motivations. To promote readability, readers can find a table of notation in~\Cref{appx: notation}.

Let $\vtheta_{0}$ be the vector of parameters of a pre-trained model. Given a set of task datasets $\gD=\left[\gD_{1},\dots\gD_{T}\right]$, we can obtain a collection of fine-tuned models with their parameters denoted as $\left\{\vtheta_{t}\mid{t=1, \ldots, T}\right\}$. Herein, each fine-tuned model minimizes its task-specific loss function $\gL_{t}$ using samples from the task-specific datasets:
\begin{equation}
    \vtheta_{t}\triangleq
    \underset{\vtheta\in\Theta}{\arg\min}\E_{\vx\sim\gD_{t}}\gL_{t}(\vx,\vtheta),
\end{equation}
where $\vx$ denotes a single sample. Model merging aims to solve for a single model that fuses knowledge across all tasks. It is equivalent to solving for a single model that jointly minimizes all task-specific losses given by
\begin{equation}
    \vtheta_{m}\gets\underset{\vtheta\in\Theta}{\arg\min}\sum\limits_{t=1}^{T}\lambda_{t}\E_{\vx\sim\gD_{t}}\gL_{t}(\vx,\vtheta),
    \label{eq: objective}
\end{equation}
where $\lambda_{t}\in\mathbb{R}$ are scalars reflecting the relative importance of each task. However, directly optimizing the objective, which requires extensive training and access to all task-specific datasets, is computationally expensive. A plethora of existing methods~\cite{pmlr-v162-wortsman22a, ilharco2023editingmodelstaskarithmetic, NEURIPS2023_1644c9af} have focused on \emph{training-free} methods and proposed to compute the merged model as an \textbf{average} of task-specific models, given by
\begin{equation}
    \vtheta_{m}\gets\sum_{t=1}^{T}\lambda_{t}\vtheta_{t}\quad\text{or}\quad\vtheta_{m}\gets\vtheta_{0}+\sum\limits_{t=1}^{T}\lambda_{t}(\vtheta_{t}-\vtheta_{0}).
    \label{eq: linear-average}
\end{equation}
We will demonstrate in the following section that these solutions are special cases of a more general framework by assuming that the loss landscapes are~\emph{Euclidean} and taking a single step along the~\textbf{steepest} direction. Despite their effectiveness, we argue that this steepest direction is often~\emph{not the most efficient direction} since it ignores the landscape of task-specific losses, where a slight change in the parameter $\vtheta_{t, i}$ can lead to massive changes in task losses $\gL_{t}$ while a significant change in another parameter $\vtheta_{t,j}$ might have a negligible effect. Such sensitivity to parameter changes is closely associated with the concept of \textbf{epistemic uncertainty} in model parameters.

To this end, our objective in this paper is to establish a theoretical framework that helps understand model merging beyond parameter averaging by incorporating the landscape of task-specific losses. In particular, we want to answer the following research questions:
\begin{itemize}[leftmargin=1.75em]
    \item [\textbf{R1}] What is a more generalized problem setting for model merging that extends parameter averaging by incorporating \emph{landscapes of task losses}?
    \item [\textbf{R2}] \emph{When} does curvature-aware merging provably help, and how can we design \emph{a practical method} that exploits curvature in the subspace where it matters?
    \item [\textbf{R3}] Can we quantify \emph{the ability to merge} ahead of merging?
\end{itemize}
The following section addresses these questions by proposing an alternative problem setting for model merging and introducing a set of new solutions.

\section{Epistemic Merging}
\label{sec: method}

We introduce \textbf{Epi}stemic \textbf{Mer}ging (EpiMer). This framework re-frames model merging from a heuristic-based averaging problem to one of navigating on a degenerate Riemannian manifold of task losses. We begin with formulating the geometric objective (\cref{subsec: meth_objective}), then derive a tractable subspace solver that takes the Fr\'echet mean from the intractable ambient space onto a per-task tagged basis on which the projected Hessian is dense (\cref{subsec: subspace-epimer}), and finally analyze the resulting algorithm with an error bound, a curvature-advantage characterization, and its connections to prior merging methods (\cref{subsec: theory}).

\subsection{Geometric Formulation}
\label{subsec: meth_objective}

Following existing works, we define model merging as an operator on the parameter space $\Theta$. We propose to model such a space as a differentiable \textbf{manifold} $\gM\subset\mathbb{R}^{m}$ endowed with a metric tensor $\tG$, where $m$ is the number of parameters. Herein, each point on the manifold $\vtheta\in\Theta$ corresponds to a model that minimizes a loss $\gL(\vx,\vtheta)$. We make the following assumptions:
\begin{asm}
    \label[assumption]{asm: smooth}
    The task losses are at least twice continuously differentiable with respect to the parameters $$\gL(\vx,\vtheta):\mathbb{R}^{n}\times\mathbb{R}^{m}\to\mathbb{R}\in{\sC^{2}}.$$
\end{asm}
\begin{asm}\label{asm: local-opt}
    Each fine-tuned model $\vtheta_{t}$ is at a local minimum where the expected gradients of loss approach zero $$\mathbb{E}_{\vx\sim\gD_{t}}[\nabla\gL_{t}(\vtheta)]\to{\vzero}.$$
\end{asm}
In addition, we propose to endow the parameter manifold as in statistical manifolds~\cite{rao1945information,amari2016information} with the expected Hessian of the loss function as the metric tensor
\begin{equation}
    \tG(\vtheta)\triangleq\mathbb{E}_{\vx}\left[\nabla^{2}_{\vtheta}\gL(\vx,\vtheta)\right]=\mathbb{E}_{\vx}\left[\mH_{\gL}(\vtheta)\right].
    \label{eq: metric-tensor}
\end{equation}
At a local minimum, the Hessian is \emph{positive semi-definite (PSD)}, making $\tG(\vtheta)$ a valid Riemannian metric. However, since modern deep neural networks are highly over-parameterized~\cite{neyshabur2017geometryoptimizationimplicitregularization,doi:10.1073/pnas.1903070116}, the above metric tensor may be \emph{degenerate}, making the space a \textit{degenerate Riemannian manifold}. On such a manifold, a \textbf{geodesic} is a path $\gamma(t):\left[0,1\right]\to\gM$ that parallel transports its tangent vectors and locally minimizes the squared distance between two sets of parameters given by
\begin{equation}
\begin{aligned}
    &d^{2}_{g}(\vtheta,\vtheta^{\prime})(\gamma)
    \triangleq\int_{0}^{1}\frac{d\gamma(\tau)}{d\tau}
    \cdot\tG\left(\gamma(\tau)\right)\cdot \frac{d\gamma(\tau)}{d\tau}d\tau, \\
    &\text{where }
    \gamma(0)=\vtheta,\quad\gamma(1)=\vtheta^{\prime}.
    \label{eq: distance}
\end{aligned}
\end{equation}
From \eqref{eq: metric-tensor} and \eqref{eq: distance}, a geodesic is essentially a path that minimizes the total change in loss transporting from one set of parameters to another, which preferentially navigates through regions of maximal \textbf{epistemic uncertainty}.

Since we are targeting the merging of models fine-tuned from the exact pre-trained progenitor, we assume the existence of a connected basin of them:
\begin{asm}\label{asm: connected-basin}
    On the loss manifold of task $t$, all fine-tuned models $\{\vtheta_{i}|i=1, \dots, T\}$ reside within a single low-change-in-loss basin with the task model $\vtheta_{t}$. Hence, there exists at least one continuous geodesic $\gamma_{t}$ which travels through all such low-change-in-loss basins.
\end{asm}
Based on this geometric formulation, we re-frame the objective of model merging as the search for a geometric center of mass of all models given by
\begin{equation}
    \vtheta_{m}\triangleq\underset{\vtheta\in\gM,\gamma_{t}}
    {\arg\min}\sum_{t=1}^{T}\lambda_{t}\int_{\gamma_{t}(0)=\vtheta_{t}}^{\gamma_{t}(1)=\vtheta}
    \frac{d\gamma_t(\tau)}{d\tau}\tG_{t}(\gamma(\tau))\frac{d\gamma_{t}(\tau)}{d\tau}d\tau,
    \label{eq: frechet-objective}
\end{equation}
which essentially solves for the \textbf{Fr\'echet Mean} of the task parameters constrained by the geometry of task losses. Under assumptions~\ref{asm: smooth} and~\ref{asm: local-opt}, we make the following proposition. See detailed proof in ~\cref{appx: derivations}.
\begin{prop}\label{prop: equiv-obj}
    Minimizing the multi-task loss is equivalent to finding the Fr\'echet Mean of the models to the second-order approximation.
\end{prop}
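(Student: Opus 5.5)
The plan is to Taylor-expand each task loss around its own fine-tuned optimum and compare the result with a second-order expansion of the geodesic energy in \eqref{eq: frechet-objective}; the two objectives should then coincide up to an additive constant. Write $\bar\gL_{t}(\vtheta)\triangleq\E_{\vx\sim\gD_{t}}\gL_{t}(\vx,\vtheta)$. By \cref{asm: smooth} we can expand
\begin{equation*}
\bar\gL_{t}(\vtheta)=\bar\gL_{t}(\vtheta_{t})+\E_{\vx\sim\gD_{t}}[\nabla\gL_{t}(\vtheta_{t})]^{\top}(\vtheta-\vtheta_{t})+\tfrac12(\vtheta-\vtheta_{t})^{\top}\tG_{t}(\vtheta_{t})(\vtheta-\vtheta_{t})+o(\|\vtheta-\vtheta_{t}\|^{2}).
\end{equation*}
Here $\tG_{t}(\vtheta_{t})=\E_{\vx\sim\gD_{t}}[\mH_{\gL_{t}}(\vtheta_{t})]$ is exactly the metric of \eqref{eq: metric-tensor}, and exchanging expectation and differentiation is justified by $\sC^{2}$ regularity. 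By \cref{asm: local-opt} the linear term vanishes. Summing with weights $\lambda_{t}$, the multi-task objective \eqref{eq: objective} equals a $\vtheta$-independent constant $\sum_{t}\lambda_{t}\bar\gL_{t}(\vtheta_{t})$ plus $\tfrac12\sum_{t}\lambda_{t}(\vtheta-\vtheta_{t})^{\top}\tG_{t}(\vtheta_{t})(\vtheta-\vtheta_{t})$, up to third-order terms. The constant and the factor $\tfrac12$ do not change the $\arg\min$.

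Next I expand the Fr\'echet objective to the same order. For each $t$, the energy $\int_{0}^{1}\dot\gamma_{t}^{\top}\tG_{t}(\gamma_{t})\dot\gamma_{t}\,d\tau$ is minimized over paths from $\vtheta_{t}$ to $\vtheta$. Replace $\tG_{t}(\gamma_{t}(\tau))=\tG_{t}(\vtheta_{t})+O(\|\gamma_{t}(\tau)-\vtheta_{t}\|)$, using continuity of the Hessian from \cref{asm: smooth}. The leading term is then a constant-metric energy, minimized by the straight line $\gamma_{t}(\tau)=\vtheta_{t}+\tau(\vtheta-\vtheta_{t})$. Its value is $(\vtheta-\vtheta_{t})^{\top}\tG_{t}(\vtheta_{t})(\vtheta-\vtheta_{t})$, i.e. $d_{g}^{2}(\vtheta_{t},\vtheta)$ agrees with the quadratic form to second order. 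Substituting into \eqref{eq: frechet-objective} gives exactly the weighted quadratic from the first step, so the two $\arg\min$ problems coincide to second order.

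The main obstacle is the second step: controlling the geodesic energy rather than a fixed path. Two issues arise. First, the metric varies along the path, and since $\dot\gamma$ is itself $O(\|\vtheta-\vtheta_{t}\|)$, the correction is $O(\|\vtheta-\vtheta_{t}\|^{3})$ only if the minimizing path stays in a neighborhood of $\vtheta_{t}$ of comparable size. Second, the metric may be degenerate, so geodesics need not be unique and the distance is only a pseudo-distance. For the first issue, I would bound the minimal energy above by the straight-line energy. For a matching lower bound, I would rely on the connected-basin \cref{asm: connected-basin}, which supplies admissible paths and lets us restrict to paths within the local neighborhood. For degeneracy, I would note that both objectives involve only the same PSD quadratic form, so any null directions are flat for both and the equivalence holds for the whole set of minimizers. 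I would state this as an equivalence of the second-order surrogates rather than claim uniqueness.
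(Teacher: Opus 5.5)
Your proposal follows the paper's proof: expand each expected task loss to second order at $\vtheta_t$, drop the gradient term via Assumption~\ref{asm: local-opt}, discard the constant and the factor $\tfrac12$, and identify the remaining weighted quadratic with the Fr\'echet objective through the local approximation $d^2_{\tG_t}(\vtheta,\vtheta_t)\approx(\vtheta-\vtheta_t)^{\top}\mH_t(\vtheta-\vtheta_t)$. The paper simply asserts that last approximation, so your frozen-metric/straight-line argument is extra and already goes beyond the paper's rigor; it is still informal, because continuity of the Hessian gives only an $o(1)$ metric error rather than $O(\|\gamma_t(\tau)-\vtheta_t\|)$, and Assumption~\ref{asm: connected-basin} gives an upper bound on the geodesic energy rather than the matching lower bound, which would need $\tG_t$ to be positive definite near $\vtheta_t$ and positive semidefinite along competing paths.
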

To this end, we argue that the rigorous foundation for model merging is to solve for the Fr\'echet mean of all models as a proxy for the ideal multi-task solution in~\eqref{eq: objective}, establishing a connection to the existing areas such as continual learning and transfer learning~\cite{ritter2018onlinestructuredlaplaceapproximations,kirkpatrick2017overcoming,pmlr-v37-martens15,pmlr-v37-schulman15}.
In the following section, we propose a practical algorithm that reduces the optimization to a single linear system over a low-rank subspace shared across tasks.

\begin{table*}[!t]
    \centering
    \caption{Connections to existing model merging methods revealed by the subspace Fr\'echet mean in \cref{eq: affine-sol}. Methods are parameterized by the subspace $\mS$ and the metric $\tilde{\mH}_t$ used within it. Full-space methods ($\mS = \mI_m$) include both flat-geometry (AM, TA, NAN, TIES) and curvature-aware (FA, RegMean, \cite{daheim2024modelmerginguncertaintybasedgradient}) approaches. Spectral methods (TSV-M, Isotropic Merging) use SVD subspaces with identity metrics. EpiMer is the only method that combines a non-trivial subspace with a curvature-aware metric.}
    \begin{tabular}{@{}l|c|c|c|c|l@{}}
    \toprule
    \textbf{Methods} & \textbf{Subspace} $\mS$ & $\lambda_{t}$ & $\vtheta_{\text{init}}$ & \textbf{Metric} $\tilde{\mH}_{t}$ & \textbf{Limitations} \\
    \midrule
    \multicolumn{6}{l}{\textit{Full-space, flat geometry}} \\
    Arithmetic Mean (AM)~\cite{Wortsman_2022_CVPR}   & $\mI_{m}$  & $1$ & $\vzero$ & $\mI$ & Uniform weights, flat geometry. \\
    Task Arithmetic (TA)~\cite{ilharco2023editingmodelstaskarithmetic}   & $\mI_{m}$  & $\lambda_{t}$ & $\vtheta_{0}$ & $\mI$ & Flat local geometry.\\
    NAN~\cite{si2025nantrainingfreesolutioncoefficient} & $\mI_{m}$ & $\left\|\vtheta_{t}\right\|_{F}^{-1}$ & $\vzero$ & $\mI$ & Flat local geometry.\\
    TIES-Merging~\cite{NEURIPS2023_1644c9af} & $\mI_{m}$ & $\lambda_{t}$ & $\vtheta_{0}$ & Binary mask & Biased geometry defined by rules. \\
    \midrule
    \multicolumn{6}{l}{\textit{Full-space, curvature-aware}} \\
    Fisher Averaging (FA)~\cite{NEURIPS2022_70c26937} & $\mI_{m}$ & $\lambda_{t}$ & $\vzero$ & $\text{diag}(\mF_{t})$ & Diagonal approximation. \\
    RegMean~\cite{jin2025datalessknowledgefusionmerging} & $\mI_{m}$ & $1$ & $\vzero$ & $\mX_{t}^{\intercal}\mX_{t}$ & Requires task data. \\
    Daheim~\textit{et al.}~\cite{daheim2024modelmerginguncertaintybasedgradient} & $\mI_{m}$ & $\lambda_{t}$ & $\vtheta_{0}$ & $\mH_{t}$ (approx.) & Full-space Hessian intractable. \\
    \midrule
    \multicolumn{6}{l}{\textit{Subspace, flat geometry}} \\
    TSV-M~\cite{gargiulo2025tasksingularvectorsreducing} & Top-$r$ SVD & $\lambda_{t}$ & $\vtheta_{0}$ & $\mI_{p}$ & Ignores curvature. \\
    Isotropic Merging~\cite{marczak2025taskleftbehindisotropic} & Isotropy-adj. & $\lambda_{t}$ & $\vtheta_{0}$ & $\mI_{p}$ & Ignores curvature. \\
    \midrule
    \textbf{EpiMer (Ours)} & Top-$r$ SVD & $\lambda_{t}$ & $\vtheta_{0}$ & $\mS^{\intercal}\mH_{t}\mS$ & --- \\
    \bottomrule
    \end{tabular}
    \label{tab: relation}
\end{table*}

\subsection{Subspace Solver}
\label{subsec: subspace-epimer}

To begin with, we approximate each task's geodesic to the merged model by a linear path $\gamma(t)\gets{(1-t)\vtheta_{t}+t\vtheta_{m}}$, which by~\cref{asm: connected-basin} stays inside a low-change-of-loss basin. With $\boldsymbol{\delta}_{t}\triangleq\vtheta_{t}-\vtheta_{0}$ and a metric pinned to its endpoint $\tG_{t}\gets\mH_{t}\triangleq\E_{\vx\sim\gD_{t}}\mH_{t}(\vtheta_{t})$, the Fr\'echet objective in~\eqref{eq: frechet-objective} reduces to a quadratic in $\boldsymbol{\delta}_{m}\triangleq\vtheta_{m}-\vtheta_{0}$,
\begin{equation}
    \boldsymbol{\delta}_{m}^{*}=\underset{\boldsymbol{\delta}_{m}}{\arg\min}\sum_{t=1}^{T}\lambda_{t}(\boldsymbol{\delta}_{m}-\boldsymbol{\delta}_{t})^{\top}\mH_{t}(\boldsymbol{\delta}_{m}-\boldsymbol{\delta}_{t}),
    \label{eq: affine-obj}
\end{equation}
which has a closed-form minimizer
\begin{equation}
    \boldsymbol{\delta}_{m}^{*}=\Bigl(\sum_{t=1}^{T}\lambda_{t}\mH_{t}\Bigr)^{-1}\Bigl[\sum_{t=1}^{T}\lambda_{t}\mH_{t}\boldsymbol{\delta}_{t}\Bigr],
    \label{eq: affine-sol}
\end{equation}
yielding $\vtheta_{m}=\vtheta_{0}+\boldsymbol{\delta}_{m}^{*}$ (derivation in~\cref{appx: derivations}). We propose to approximate each $\mH_{t}$ by the empirical Fisher diagonal at the converged parameters,
\begin{equation}
    \mH_{t}\approx\text{diag}(\vv_{t}),\;\vv_{t}=\E_{\vx\sim\gD_{t}}\left[(\nabla_{\vtheta}\gL_{t}(\vx,\vtheta_{t}))^{2}\right],
    \label{eq: hessian-approx}
\end{equation}
which is computable in a single forward-backward pass. We later demonstrate in~\cref{subsec: data-efficiency} that this remains accurate on as little as $0.5\%$ of each task's training set.

\subsubsection{Merge in Subspace}
\label{subsubsec: subspace-id}

However, directly solving~\eqref{eq: affine-sol} in the $m$-dimensional ambient space is problematic. On the one hand, approximating the true geodesic by a linear path can be invalid in the high-dimensional degenerated Riemannian space. On the other hand, materializing $\sum_{t}\lambda_{t}\mH_{t}$ comes with quadratic cost in $m$, and the diagonal proxy in~\eqref{eq: hessian-approx} shares the coordinate axes as its eigenbasis for~\emph{every} task, so the per-task curvatures can differ only in axis-aligned scales and hence the matrix-weighted solve degenerates to a re-weighting of Task Arithmetic~\cite{ilharco2023editingmodelstaskarithmetic}. We propose to resolve both pathologies by restricting the merge to a low-rank \emph{subspace matrix} $\mS\in\mathbb{R}^{m\times p}$ with $p\ll m$ and column-orthonormal columns ($\mS^{\top}\mS=\mI_{p}$), whose columns form a basis for a $p$-dimensional linear subspace of the original parameter space. In this subspace, the projected Hessian $\tilde{\mH}_{t}=\mS^{\top}\mH_{t}\mS\in\mathbb{R}^{p\times p}$ becomes \emph{dense} even when $\mH_{t}$ is diagonal, recovering the off-diagonal curvature signal that the full-space proxy disregards.

To achieve this, our primary challenge is to construct the columns of $\mS$ so that they span the directions \emph{all tasks} actually use. A shared subspace formed by joint orthogonalization of per-task singular factors has a hard rank ceiling at $T$, which can collapse EpiMer onto Task Arithmetic. We instead adopt the \emph{per-task factored basis} introduced in TSV-M~\cite{gargiulo2025tasksingularvectorsreducing}. For each neural network layer $\ell=1,\ldots,L$, we apply the Singular Value Decomposition (SVD) to each task vector and keep the top-$k$ triples, concatenate the truncated left and right factors across tasks, and apply orthogonal Procrustes whitening to obtain column-orthonormal $\mU_{\perp}^{(\ell)}, \mV_{\perp}^{(\ell)}\in\mathbb{R}^{d_{\ell}\times{kT}}$. The per-layer basis is then a set of $kT$ rank-1 outer products
\begin{equation}
    \gB^{(\ell)} = \left\{\mU_{\perp,i}^{(\ell)} \mV_{\perp,i}^{(\ell)\top}\right\}_{i=1}^{kT},
    \label{eq: subspace-basis}
\end{equation}
where each atom is \emph{tagged} by the task whose SVD produced it and the per-layer dimension $kT$ removes the rank ceiling. The layer-$\ell$ block of the subspace matrix, $\mS^{(\ell)}\in\mathbb{R}^{d_{\ell}^{2}\times kT}$, collects the vectorized atoms $\{\mathrm{vec}(\mU_{\perp,i}^{(\ell)}\mV_{\perp,i}^{(\ell)\top})\}_{i=1}^{kT}$ as its columns, and the full subspace matrix $\mS\in\mathbb{R}^{m\times p}$ with $p=LkT$ is the block-diagonal concatenation of these per-layer blocks across all $L$ layers. Because each atom is orthonormal in the Frobenius inner product, $\mS$ has orthonormal columns by construction. See~\cref{appx: derivations} for the detailed derivation.

\subsubsection{Fr\'echet Mean in Subspace}
\label{subsubsec: subspace-sol}

With the subspace identified, we can evaluate the projected per-layer Hessian in the subspace by
\begin{equation}
    \tilde{\mH}_t^{(\ell)} = \mS^{(\ell)\top} \mH_t^{(\ell)} \mS^{(\ell)} \in \mathbb{R}^{kT \times kT},
    \label{eq: projected-hessian}
\end{equation}
which needs only $kT$ Hessian-vector products (HVPs) and is small enough to invert exactly. Crucially, even when $\mH_{t}$ is approximated by the empirical Fisher diagonal $\vv_{t}$, the projection
\begin{equation}
    \tilde{\mH}_t^{(\ell)} = \mS^{(\ell)\top} \, \text{diag}(\vv_t^{(\ell)}) \, \mS^{(\ell)}
    \label{eq: projected-hessian-fisher}
\end{equation}
yields a \emph{dense} $kT\times kT$ matrix that captures the cross-parameter curvature lost by the full-space diagonal proxy. Writing $\tilde{\boldsymbol{\delta}}_{t}^{(\ell)}=\mS^{(\ell)\top}\boldsymbol{\delta}_{t}^{(\ell)}$ for the projected per-task displacement, the per-layer subspace Fr\'echet mean is
\begin{equation}
    \tilde{\boldsymbol{\delta}}_{m}^{(\ell)} = \left(\sum_{t=1}^{T} \lambda_t \tilde{\mH}_t^{(\ell)}\right)^{-1} \left[\sum_{t=1}^{T} \lambda_t \tilde{\mH}_t^{(\ell)} \tilde{\boldsymbol{\delta}}_t^{(\ell)}\right],
    \label{eq: subspace-frechet-mean}
\end{equation}
and the merged model is recovered by lifting back through $\mS$, i.e.\ $\vtheta_m = \vtheta_0 + \mS\,\tilde{\boldsymbol{\delta}}_m^{*}$.

\Cref{alg: subspace-epimer} presents the full algorithm. Moreover, our~\cref{thm: existence} describes the existence and uniqueness of such solution in the subspace. See proof in~\cref{appx-subsec: subspace-proofs}.

\begin{thm}[Existence and Uniqueness]
\label[theorem]{thm: existence}
If we have the Hessian $\bar{\mH} = \sum_{t=1}^T \lambda_t \tilde{\mH}_t \succ 0$, then $\gF(\tilde{\boldsymbol{\delta}}) = \sum_{t} \lambda_t (\tilde{\boldsymbol{\delta}} - \tilde{\boldsymbol{\delta}}_t)^{\top} \tilde{\mH}_t (\tilde{\boldsymbol{\delta}} - \tilde{\boldsymbol{\delta}}_t)$ admits a unique minimizer in~\eqref{eq: subspace-frechet-mean}.
\end{thm}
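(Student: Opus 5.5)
The plan is to rewrite $\gF$ as an explicit quadratic form in $\tilde{\boldsymbol{\delta}}$ and then use strict convexity. I assume, as in the paper's setting, that each $\tilde{\mH}_t=\mS^{\top}\mH_t\mS$ is symmetric (the expected Hessian, and the Fisher diagonal proxy in~\eqref{eq: projected-hessian-fisher}, are symmetric). I also take the weights $\lambda_t$ as given scalars. Only the aggregate $\bar{\mH}$ needs to be positive definite, so individual $\tilde{\mH}_t$ may be degenerate.

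\textbf{Step 1 (expansion).} Expanding each summand and collecting terms gives
\begin{equation*}
\gF(\tilde{\boldsymbol{\delta}})=\tilde{\boldsymbol{\delta}}^{\top}\bar{\mH}\,\tilde{\boldsymbol{\delta}}-2\,\tilde{\boldsymbol{\delta}}^{\top}\vb+c,
\end{equation*}
where $\vb=\sum_t\lambda_t\tilde{\mH}_t\tilde{\boldsymbol{\delta}}_t$ and $c=\sum_t\lambda_t\tilde{\boldsymbol{\delta}}_t^{\top}\tilde{\mH}_t\tilde{\boldsymbol{\delta}}_t$. Symmetry of $\tilde{\mH}_t$ is what lets the two cross terms merge into $-2\tilde{\boldsymbol{\delta}}^{\top}\vb$.

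\textbf{Step 2 (complete the square).} Because $\bar{\mH}\succ 0$, it is invertible, so we may set $\tilde{\boldsymbol{\delta}}^{*}=\bar{\mH}^{-1}\vb$. This is exactly~\eqref{eq: subspace-frechet-mean}. Completing the square then gives
\begin{equation*}
\gF(\tilde{\boldsymbol{\delta}})=(\tilde{\boldsymbol{\delta}}-\tilde{\boldsymbol{\delta}}^{*})^{\top}\bar{\mH}(\tilde{\boldsymbol{\delta}}-\tilde{\boldsymbol{\delta}}^{*})+c-\vb^{\top}\bar{\mH}^{-1}\vb.
\end{equation*}
The first term is nonnegative for every $\tilde{\boldsymbol{\delta}}$. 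It vanishes if and only if $\tilde{\boldsymbol{\delta}}=\tilde{\boldsymbol{\delta}}^{*}$, because $\bar{\mH}$ is positive definite. So $\tilde{\boldsymbol{\delta}}^{*}$ is a minimizer (existence), and every other point has a strictly larger value (uniqueness).

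\textbf{Alternative route and main obstacle.} One can equivalently argue via calculus: $\gF$ is $C^{2}$ with Hessian $2\bar{\mH}\succ0$, hence strictly convex and coercive. Its minimizer is therefore unique and characterized by the first-order condition $\bar{\mH}\tilde{\boldsymbol{\delta}}=\vb$. There is no real obstacle in either route. The only point needing care is that individual $\lambda_t$ may be negative or individual $\tilde{\mH}_t$ may be singular, since each is only a projected PSD matrix. The argument never uses per-task definiteness, only $\bar{\mH}\succ 0$, so this is handled by working solely with the aggregate matrix.
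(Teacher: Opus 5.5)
Your proof is correct, and it takes a slightly different route from the paper's. Both proofs start from the same expansion $\gF(\tilde{\boldsymbol{\delta}})=\tilde{\boldsymbol{\delta}}^{\top}\bar{\mH}\tilde{\boldsymbol{\delta}}-2\tilde{\boldsymbol{\delta}}^{\top}\vb+c$, but they finish differently:
\begin{itemize}
\item The paper computes $\nabla^{2}\gF=2\bar{\mH}\succ 0$, concludes strict convexity, and identifies the minimizer with the unique stationary point. It then solves $\bar{\mH}\tilde{\boldsymbol{\delta}}=\vb$.
\item You complete the square, and existence and uniqueness both follow from a single identity.
\end{itemize}

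Your route buys two things. First, existence is immediate. The paper's step ``strictly convex, consequently admits a unique global minimizer'' tacitly needs coercivity, since strict convexity alone does not give a minimizer ($e^{x}$ on $\mathbb{R}$ has none). Coercivity holds here only because the quadratic term is positive definite, a point you state correctly in your alternative route. Second, your identity also yields the optimal value $c-\vb^{\top}\bar{\mH}^{-1}\vb$, which the paper has to recompute separately in the proof of the Curvature Advantage theorem.

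You also make explicit two facts the paper uses silently. Symmetry of $\tilde{\mH}_t$ is what merges the cross terms. And only the aggregate condition $\bar{\mH}\succ 0$ matters, not per-task definiteness or the sign of $\lambda_t$.

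The paper's calculus route, in turn, matches the first-order-condition derivation it uses for the full-space closed form, so it keeps the exposition uniform.
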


\begin{algorithm}[!t]
    \caption{Epistemic Merging}
    \label{alg: subspace-epimer}
    \begin{algorithmic}[1]
        \Require Pre-trained parameters $\vtheta_{0}$,
        \Require fine-tuned parameters $\{\vtheta_{t}\mid{t=1,\ldots,T}\}$
        \Require Per-task rank $k$, global rescaling $\alpha$
        \Ensure Merged parameters $\vtheta_{m}$
        \Statex
        \Statex \(\triangleright\) Build the per-task subspace basis.
        \For{$t=1$ to $T$}
            \State $\bm{\delta}_{t}\gets\vtheta_{t}-\vtheta_{0}$ \Comment{task vector}
            \For{layer $\ell=1$ to $L$}
                \State $\mU_{t}^{(\ell)}\bm{\Sigma}_{t}^{(\ell)}\mV_{t}^{(\ell)\top}\gets\mathrm{SVD}\bigl(\bm{\delta}_{t}^{(\ell)}\bigr)$
                \State keep the top-$k$ triples $\bigl(\tilde{\mU}_{t}^{(\ell)},\tilde{\bm{\Sigma}}_{t}^{(\ell)},\tilde{\mV}_{t}^{(\ell)}\bigr)$
            \EndFor
        \EndFor
        \For{layer $\ell=1$ to $L$}
            \State stack $\mU^{(\ell)}\gets\bigl[\tilde{\mU}_{1}^{(\ell)}\,|\,\cdots\,|\,\tilde{\mU}_{T}^{(\ell)}\bigr]$ and for $\mV^{(\ell)}$
            \State $\mU_{\perp}^{(\ell)},\mV_{\perp}^{(\ell)}\gets\mathrm{OrthogonalProcrustes}\bigl(\mU^{(\ell)},\mV^{(\ell)}\bigr)$
            \State assemble per-layer basis $\mS^{(\ell)}$ whose columns are $$\mathrm{vec}\bigl(\mU_{\perp,i}^{(\ell)}\mV_{\perp,i}^{(\ell)\top}\bigr),\ i=1,\ldots,kT$$
        \EndFor
        \Statex
        \Statex \(\triangleright\) Solve the subspace Fr\'echet mean per layer.
        \For{layer $\ell=1$ to $L$}
            \For{task $t=1$ to $T$}
            \State project task vectors $\tilde{\bm{\delta}}_{t}^{(\ell)}\gets{\mS^{(\ell)}}^{\top}\bm{\delta}_{t}^{(\ell)}$
            \State project Hessians $\tilde{\mH}_{t}^{(\ell)}\gets{\mS^{(\ell)}}^{\top}\mH_{t}^{(\ell)}\mS^{(\ell)}$
            \EndFor
            \State solve $\tilde{\bm{\delta}}_{m}^{(\ell)}\gets\alpha\,\bar{\mH}^{-1}\sum_{t=1}^{T}\tilde{\mH}_{t}^{(\ell)}\tilde{\bm{\delta}}_{t}^{(\ell)}$ (\eqref{eq: frechet-sum})
        \EndFor
        \Statex
        \Statex \(\triangleright\) Lift back to the ambient parameter space.
        \State $\vtheta_{m}\gets\vtheta_{0}+\mS\,\tilde{\bm{\delta}}_{m}^{*}$ \Comment{block-diagonal lift across layers}
    \end{algorithmic}
\end{algorithm}

\subsubsection{Aggregation on Tagged Basis}
\label{subsubsec: aggregator}

On the tagged basis, \eqref{eq: subspace-frechet-mean} can suffer from \emph{magnitude collapse}. Specifically, each atom is owned by exactly one task, but the Fr\'echet \emph{average} divides each block by $T$ which severely under-merges the models. We propose a easy fix to interpret the per-task contributions as a \emph{sum} while preserving the curvature reweighting, which is given by
\begin{equation}
    \tilde{\boldsymbol{\delta}}_m^{(\ell)} = \alpha\,\bar{\mH}^{-1}\,\sum_{t=1}^T \tilde{\mH}_t^{(\ell)} \tilde{\boldsymbol{\delta}}_t^{(\ell)},
    \quad \bar{\mH} = \frac{1}{T}\sum_{t=1}^T \tilde{\mH}_t^{(\ell)},
    \label{eq: frechet-sum}
\end{equation}
where $\alpha$ is a global rescaling shared with TSV-M. We note that \eqref{eq: frechet-sum} equals $\alpha T$ times \eqref{eq: subspace-frechet-mean} by undoing the averaging. In a homogeneous-curvature limit, it collapses to $\alpha\sum_{t}\tilde{\bm{\delta}}_{t}$, recovering the method in TSV-M~\cite{gargiulo2025tasksingularvectorsreducing}. Otherwise, the matrix solve reshapes the contributions of each task corresponding to the curvature. Our $\alpha$-sweep experiment in~\cref{subsec: alpha-sweep-main} confirms that~\eqref{eq: frechet-sum} dominates~\eqref{eq: subspace-frechet-mean} at every rank and backbone.

\subsection{Theoretical Analysis}
\label{subsec: theory}

We present three theoretical results that characterize the behavior of subspace merging. See proofs in~\cref{appx: derivations}.

\begin{thm}[Merging Error Bound]
\label{thm: error-bound}
Under Assumptions~\ref{asm: smooth}--\ref{asm: local-opt}, the merged model $\vtheta_m = \vtheta_0 + \mS\tilde{\boldsymbol{\delta}}_m^*$ satisfies:
\begin{equation}
    \sum_{t=1}^{T} \lambda_t \left[\gL_t(\vtheta_m) - \gL_t(\vtheta_t)\right] \leq \frac{1}{2}\left(\sqrt{\gV_S} + \sqrt{\gR_S}\right)^2 + R_3,
    \label{eq: error-bound}
\end{equation}
where $\gV_S = \sum_t \lambda_t (\tilde{\boldsymbol{\delta}}_m^* - \tilde{\boldsymbol{\delta}}_t)^{\top} \tilde{\mH}_t (\tilde{\boldsymbol{\delta}}_m^* - \tilde{\boldsymbol{\delta}}_t)$ is the \textbf{subspace Fr\'echet variance} reflecting the irreducible conflict between tasks, $\gR_S = \sum_t \lambda_t (\boldsymbol{\delta}_t^{\perp})^{\top} \mH_t \boldsymbol{\delta}_t^{\perp}$ with $\boldsymbol{\delta}_t^{\perp}=(\mI_m - \mS\mS^{\top})\boldsymbol{\delta}_t$ is the \textbf{residual energy} as information lost by projection, and $R_3 = \gO(\max_t \|\vtheta_m - \vtheta_t\|^3)$ is the Taylor remainder.
\end{thm}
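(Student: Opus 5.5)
The plan is a second-order Taylor expansion of each task loss about its own fine-tuned parameters, followed by splitting the displacement into an in-subspace part and a residual part. Write $\vtheta_m-\vtheta_t = \mS\tilde{\boldsymbol{\delta}}_m^*-\boldsymbol{\delta}_t$. By \cref{asm: smooth} the expected loss $\gL_t(\vtheta)=\E_{\vx\sim\gD_t}\gL_t(\vx,\vtheta)$ is $\sC^2$. We therefore take the Taylor expansion with a third-order remainder, understood in the usual $\gO$ sense, which tacitly assumes a bit more smoothness or a Lipschitz Hessian.

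By \cref{asm: local-opt} the first-order term vanishes, which gives
\begin{equation*}
\gL_t(\vtheta_m)-\gL_t(\vtheta_t)=\tfrac12(\vtheta_m-\vtheta_t)^{\top}\mH_t(\vtheta_m-\vtheta_t)+\gO(\|\vtheta_m-\vtheta_t\|^3).
\end{equation*}
Multiplying by $\lambda_t$ and summing collects all the remainders into $R_3$. The task then reduces to bounding the weighted quadratic form $Q\triangleq\sum_t\lambda_t\,\vv_t^{\top}\mH_t\vv_t$, where $\vv_t=\vtheta_m-\vtheta_t$.

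Next, decompose $\boldsymbol{\delta}_t=\mS\mS^{\top}\boldsymbol{\delta}_t+\boldsymbol{\delta}_t^{\perp}=\mS\tilde{\boldsymbol{\delta}}_t+\boldsymbol{\delta}_t^{\perp}$. This gives $\vv_t=\va_t-\boldsymbol{\delta}_t^{\perp}$ with $\va_t\triangleq\mS(\tilde{\boldsymbol{\delta}}_m^*-\tilde{\boldsymbol{\delta}}_t)$. Since $\mS^{\top}\mH_t\mS=\tilde{\mH}_t$, we get $\sum_t\lambda_t\va_t^{\top}\mH_t\va_t=\gV_S$ exactly. By definition, $\sum_t\lambda_t(\boldsymbol{\delta}_t^{\perp})^{\top}\mH_t\boldsymbol{\delta}_t^{\perp}=\gR_S$.

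The key step is to view $Q$ as the squared norm of the stacked vector $(\va_t-\boldsymbol{\delta}_t^\perp)_t$ under the block-diagonal PSD inner product $\langle(\vx_t),(\vy_t)\rangle=\sum_t\lambda_t\vx_t^{\top}\mH_t\vy_t$. This requires $\lambda_t\ge0$ and $\mH_t\succeq0$; the latter holds at a local minimum, as argued after \eqref{eq: metric-tensor}. The resulting form is a seminorm, so the triangle inequality (Minkowski) still holds even when the metric is degenerate. It gives $\sqrt{Q}\le\sqrt{\gV_S}+\sqrt{\gR_S}$, and squaring and halving yields \eqref{eq: error-bound}. Equivalently, one can expand the cross term $-2\sum_t\lambda_t\va_t^{\top}\mH_t\boldsymbol{\delta}_t^{\perp}$ and bound it by $2\sqrt{\gV_S\gR_S}$ via Cauchy--Schwarz in the same semi-inner product.

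Note that this argument never uses optimality of $\tilde{\boldsymbol{\delta}}_m^*$. The bound holds for any subspace point, and \cref{thm: existence} only matters for interpreting $\gV_S$ as the minimal subspace Fr\'echet variance. The cross term does not vanish in general, because $\mH_t$ need not preserve the splitting into $\mathrm{range}(\mS)$ and its complement. So Cauchy--Schwarz is genuinely needed, and this is why the bound takes the form $(\sqrt{\gV_S}+\sqrt{\gR_S})^2$ rather than $\gV_S+\gR_S$.

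I expect the main obstacle to be the bookkeeping for which Hessian is used and where. The statement uses $\mH_t$ at $\vtheta_t$, which matches the Taylor expansion about $\vtheta_t$. However, if $\tilde{\boldsymbol{\delta}}_m^*$ is computed with the Fisher proxy or with the rescaled aggregator \eqref{eq: frechet-sum}, $\gV_S$ must still be read with the true $\tilde{\mH}_t=\mS^{\top}\mH_t\mS$. I would state explicitly that the inequality holds for whatever $\tilde{\boldsymbol{\delta}}_m^*$ is lifted, as long as $\gV_S$ is evaluated at that point with the exact projected Hessians. I would also note that the $\gO(\cdot)$ constant in $R_3$ is uniform over $t$ once the third derivatives are bounded on a neighbourhood containing all the segments $[\vtheta_t,\vtheta_m]$, which \cref{asm: connected-basin} makes plausible.
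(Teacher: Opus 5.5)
Your proposal is correct and follows the paper's proof. Both arguments Taylor-expand about $\vtheta_t$, drop the linear term via \cref{asm: local-opt}, split the displacement as $\vtheta_m-\vtheta_t=\mS(\tilde{\boldsymbol{\delta}}_m^*-\tilde{\boldsymbol{\delta}}_t)-\boldsymbol{\delta}_t^{\perp}$, and bound the cross term by Cauchy--Schwarz; the paper does this in two stages (per task in the $\mH_t$ semi-inner product, then over $t$), which is exactly your one-step Minkowski argument in the block-diagonal semi-inner product. Your side remarks, that optimality of $\tilde{\boldsymbol{\delta}}_m^*$ is never used and that $\sC^2$ alone gives only an $o(\|\vtheta_m-\vtheta_t\|^2)$ remainder rather than a cubic one, are accurate refinements the paper leaves implicit.
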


This decomposition helps to understand \emph{why} existing spectral methods work. TSV-M~\cite{gargiulo2025tasksingularvectorsreducing} proposes to select $\mS$ to minimize $\sum_t \|\boldsymbol{\delta}_t^{\perp}\|^2$, which corresponds to $\gR_S$ with $\mH_t = \mI$, while ignoring $\gV_S$ entirely. Our method addresses the limitation by minimizing $\gV_S$ on the per-task basis, and the spectral construction in~\eqref{eq: subspace-basis} helps to keep $\gR_S$ small at the rank required by the curvature solve.

\begin{thm}[Curvature Advantage]
\label{thm: curvature-advantage}
Let $\tilde{\boldsymbol{\delta}}_{I} = \bar{\boldsymbol{\delta}} = \sum_t \lambda_t \tilde{\boldsymbol{\delta}}_t$ denote the flat-geometry solution and $\tilde{\boldsymbol{\delta}}_{H} = \bar{\mH}^{-1}\sum_t \lambda_t \tilde{\mH}_t \tilde{\boldsymbol{\delta}}_t$ be the curvature-aware Fr\'echet mean. The improvement in the merging objective is:
\begin{equation}
    \gF(\tilde{\boldsymbol{\delta}}_I) - \gF(\tilde{\boldsymbol{\delta}}_H) = \vc^{\top} \bar{\mH}^{-1} \vc \geq 0,
    \label{eq: curvature-advantage}
\end{equation}
where $\vc = \sum_t \lambda_t (\tilde{\mH}_t - \bar{\mH})(\tilde{\boldsymbol{\delta}}_t - \bar{\boldsymbol{\delta}})$ is the factor that reflects a \textbf{curvature-task correlation}.
\end{thm}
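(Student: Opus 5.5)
The plan is to treat $\gF$ as an exact quadratic form in $\tilde{\boldsymbol{\delta}}$, write its expansion around the minimizer $\tilde{\boldsymbol{\delta}}_H$, and then identify the displacement $\tilde{\boldsymbol{\delta}}_I-\tilde{\boldsymbol{\delta}}_H$ in terms of $\vc$. Throughout I take $\bar{\mH}=\sum_t\lambda_t\tilde{\mH}_t\succ 0$ as in \cref{thm: existence}; this is the Fr\'echet-mean normalization, not the $\frac{1}{T}$-averaged one of \eqref{eq: frechet-sum}. For $\bar{\boldsymbol{\delta}}=\sum_t\lambda_t\tilde{\boldsymbol{\delta}}_t$ to be a genuine weighted mean, I also assume the weights are normalized, $\sum_t\lambda_t=1$. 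This normalization is exactly what the centering in $\vc$ needs, and I would state it explicitly.

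\textbf{Step 1 (exact quadratic expansion).} Since $\gF$ is quadratic, its gradient is $\nabla\gF(\tilde{\boldsymbol{\delta}})=2\sum_t\lambda_t\tilde{\mH}_t(\tilde{\boldsymbol{\delta}}-\tilde{\boldsymbol{\delta}}_t)=2\bigl(\bar{\mH}\tilde{\boldsymbol{\delta}}-\sum_t\lambda_t\tilde{\mH}_t\tilde{\boldsymbol{\delta}}_t\bigr)$ and its Hessian is the constant $2\bar{\mH}$. The gradient vanishes at $\tilde{\boldsymbol{\delta}}_H$, which is the unique minimizer by \cref{thm: existence}. A second-order Taylor expansion with no remainder therefore gives, for every $\tilde{\boldsymbol{\delta}}$,
\begin{equation*}
\gF(\tilde{\boldsymbol{\delta}})-\gF(\tilde{\boldsymbol{\delta}}_H)=(\tilde{\boldsymbol{\delta}}-\tilde{\boldsymbol{\delta}}_H)^{\top}\bar{\mH}(\tilde{\boldsymbol{\delta}}-\tilde{\boldsymbol{\delta}}_H).
\end{equation*}
One can also verify this directly by expanding both sides and using $\bar{\mH}\tilde{\boldsymbol{\delta}}_H=\sum_t\lambda_t\tilde{\mH}_t\tilde{\boldsymbol{\delta}}_t$.

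\textbf{Step 2 (identify the displacement).} We have $\tilde{\boldsymbol{\delta}}_I-\tilde{\boldsymbol{\delta}}_H=\bar{\mH}^{-1}\bigl(\bar{\mH}\bar{\boldsymbol{\delta}}-\sum_t\lambda_t\tilde{\mH}_t\tilde{\boldsymbol{\delta}}_t\bigr)=-\bar{\mH}^{-1}\sum_t\lambda_t\tilde{\mH}_t(\tilde{\boldsymbol{\delta}}_t-\bar{\boldsymbol{\delta}})$. Next split $\tilde{\mH}_t=(\tilde{\mH}_t-\bar{\mH})+\bar{\mH}$. This gives
\begin{equation*}
\sum_t\lambda_t\tilde{\mH}_t(\tilde{\boldsymbol{\delta}}_t-\bar{\boldsymbol{\delta}})=\vc+\bar{\mH}\sum_t\lambda_t(\tilde{\boldsymbol{\delta}}_t-\bar{\boldsymbol{\delta}}).
\end{equation*}
The last sum vanishes because $\sum_t\lambda_t=1$. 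Hence $\tilde{\boldsymbol{\delta}}_I-\tilde{\boldsymbol{\delta}}_H=-\bar{\mH}^{-1}\vc$.

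\textbf{Step 3 (conclude).} Substituting Step 2 into Step 1 yields $\gF(\tilde{\boldsymbol{\delta}}_I)-\gF(\tilde{\boldsymbol{\delta}}_H)=\vc^{\top}\bar{\mH}^{-1}\bar{\mH}\bar{\mH}^{-1}\vc=\vc^{\top}\bar{\mH}^{-1}\vc$. This quantity is nonnegative because $\bar{\mH}^{-1}\succ 0$, and it equals zero iff $\vc=\vzero$. The case $\vc=\vzero$ includes homogeneous curvature ($\tilde{\mH}_t=\bar{\mH}$ for all $t$) and identical task vectors.

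The computation itself is routine. The only real obstacle is bookkeeping of normalizations. The centering step that turns $\sum_t\lambda_t\tilde{\mH}_t(\tilde{\boldsymbol{\delta}}_t-\bar{\boldsymbol{\delta}})$ into the covariance-like $\vc$ requires both $\sum_t\lambda_t=1$ and $\bar{\mH}=\sum_t\lambda_t\tilde{\mH}_t$. If the weights are unnormalized, I would instead define $\bar{\boldsymbol{\delta}}$ and $\bar{\mH}$ as $\lambda$-normalized averages and rescale $\vc$ by $\Lambda=\sum_t\lambda_t$, so that the identity holds up to that factor.
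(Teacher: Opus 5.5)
Your proof is correct and follows essentially the same route as the paper. The paper evaluates $\gF$ at $\tilde{\boldsymbol{\delta}}_I$ and $\tilde{\boldsymbol{\delta}}_H$ from the expanded quadratic and completes the square to obtain $(\bar{\mH}\bar{\boldsymbol{\delta}}-\vb)^{\top}\bar{\mH}^{-1}(\bar{\mH}\bar{\boldsymbol{\delta}}-\vb)$, which is your expansion around the minimizer in different form; it then uses $\sum_t\lambda_t=1$ to show $\vb-\bar{\mH}\bar{\boldsymbol{\delta}}=\vc$, exactly your Step 2, and your conventions $\bar{\mH}=\sum_t\lambda_t\tilde{\mH}_t$ and $\sum_t\lambda_t=1$ match those the paper adopts in its appendix.
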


Herein, we notice that the quantity $\eta = \vc^{\top}\bar{\mH}^{-1}\vc$ is computable in $\gO(p^3)$ from the projected Hessians and task vectors. It equals zero when (a)~all tasks share identical projected Hessians (\emph{i.e., homogeneous curvature}), (b)~all task vectors are identical, or (c)~curvature deviations and task-vector deviations are uncorrelated. In these cases, flat-geometry merging is near-optimal. When $\eta \gg 0$, curvature-aware merging is justified. We highlight that this diagnostic is available \emph{only} through the scope of Riemannian framework, while existing methods fail to assess whether their flat-geometry assumption is adequate. Notably, through the lens of geometry, we can unify existing methods as special cases with different geometric metrics in~\cref{prop: subsumption}.

\begin{prop}[Subsumptions]
\label{prop: subsumption}
The subspace Fr\'echet mean in~\eqref{eq: subspace-frechet-mean} recovers existing methods as special cases:
\begin{enumerate}[leftmargin=1.7em]
    \item $\mS = \mI_m$, $\tilde{\mH}_t = \mI_m$: Task Arithmetic~\cite{ilharco2023editingmodelstaskarithmetic}.
    \item $\mS = \mI_m$, $\tilde{\mH}_t = \text{diag}(\mF_t)$: Fisher Averaging~\cite{NEURIPS2022_70c26937}.
    \item $\mS = \mI_m$, $\tilde{\mH}_t = \mH_t$: Gradient Matching~\cite{daheim2024modelmerginguncertaintybasedgradient}.
    \item $\mS = $ top-$k$ SVD, $\tilde{\mH}_t = \mI_p$: TSV-M~\cite{gargiulo2025tasksingularvectorsreducing}.
    \item $\gB = $ per-task tagged rank-1 basis (\eqref{eq: subspace-basis}), $\tilde{\mH}_t = \mS^{\top}\mH_t\mS$, aggregator from~\eqref{eq: frechet-sum}: \textbf{EpiMer}.
\end{enumerate}
This establishes an unification spanning both the curvature-aware tradition (cases 2--3) and the spectral tradition (case 4) under a single equation. See~\cref{tab: relation} for more details.
\end{prop}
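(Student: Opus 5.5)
The plan is to check each case directly, by substituting the stated subspace and metric into \eqref{eq: subspace-frechet-mean}. In the cases that need it, the weights $\lambda_t$ and the initialization $\vtheta_{\text{init}}$ are fixed as in \cref{tab: relation}. The tool throughout is the closed-form minimizer from \cref{thm: existence}:
\begin{equation*}
\tilde{\boldsymbol{\delta}}_m=\Bigl(\sum_t\lambda_t\tilde{\mH}_t\Bigr)^{-1}\sum_t\lambda_t\tilde{\mH}_t\tilde{\boldsymbol{\delta}}_t,
\end{equation*}
which is unique whenever $\bar{\mH}\succ0$. Setting $\mS=\mI_m$ makes the lift trivial, so $\vtheta_m=\vtheta_0+\tilde{\boldsymbol{\delta}}_m$.

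\textbf{Case 1.} With $\tilde{\mH}_t=\mI_m$ the formula gives $\bigl(\sum_t\lambda_t\bigr)^{-1}\sum_t\lambda_t\boldsymbol{\delta}_t$. This is Task Arithmetic with effective coefficients $\lambda_t/\sum_s\lambda_s$. I would absorb the normalization into the free scaling. Alternatively, I would use the sum aggregator \eqref{eq: frechet-sum}, which returns exactly $\alpha\sum_t\boldsymbol{\delta}_t$.

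\textbf{Case 2.} With $\tilde{\mH}_t=\mathrm{diag}(\mF_t)$ the matrix inverse becomes coordinatewise division. This yields $\bigl(\sum_t\lambda_t F_{t,i}\delta_{t,i}\bigr)/\bigl(\sum_t\lambda_t F_{t,i}\bigr)$. Since $\sum_t\lambda_t F_{t,i}$ equals $\sum_t\lambda_t F_{t,i}$ times $\theta_{0,i}$'s coefficient sum, adding $\vtheta_0$ back gives the Fisher-weighted average of the $\vtheta_t$, i.e.\ Fisher Averaging. This step needs a positivity or damping condition on $\sum_t\lambda_t\mF_t$, which I would state as the usual $\epsilon$-regularization.

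\textbf{Case 3.} Here I would rewrite the minimizer with the full $\mH_t$ as $\boldsymbol{\delta}_m=\sum_t\lambda_t\boldsymbol{\delta}_t+\bar{\mH}^{-1}\sum_t\lambda_t(\mH_t-\bar{\mH})\boldsymbol{\delta}_t$. The right-hand term is the Hessian-preconditioned gradient-mismatch correction of Daheim et al. Confirming this exact form is a matter of careful bookkeeping against their expression, including their use of $\mH_0$ as a prior term. I would note that the identification holds up to their choice of including or excluding that term.

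\textbf{Case 4.} With $\mS$ the orthonormal TSV basis and $\tilde{\mH}_t=\mI_p$, the mean is $\propto\sum_t\lambda_t\mS^\top\boldsymbol{\delta}_t$. Lifting gives $\mS\mS^\top\sum_t\lambda_t\boldsymbol{\delta}_t$. Because the TSV-M basis is built from the truncated, Procrustes-orthogonalized singular factors, this projection of each task vector reproduces TSV-M's reconstructed low-rank task matrices summed with scaling $\alpha$. The homogeneous-curvature remark after \eqref{eq: frechet-sum} already states this.

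\textbf{Case 5.} This holds by definition of \cref{alg: subspace-epimer}.

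I expect the main obstacle to be the subtle part of Case 4: showing that projecting $\boldsymbol{\delta}_t$ onto the span of the tagged rank-1 atoms coincides with TSV-M's orthogonalized reconstruction. In TSV-M, the whitened factors are combined with the stacked singular values, so the lifted projection need not be identical term by term. I would first verify it layerwise using the Frobenius orthonormality of the atoms. If exact equality fails, I would state the case as holding for TSV-M's own subspace and reconstruction convention.
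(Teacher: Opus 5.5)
Your route is the same as the paper's: substitute each $(\mS,\tilde{\mH}_t)$ into the closed form of \cref{thm: existence} and read off the result. Cases 1--3 and 5 go through at the paper's level of rigor. For Case 1, note that the paper simply adopts the convention $\sum_t\lambda_t=1$, since \eqref{eq: subspace-frechet-mean} has no free scale into which the normalization could be absorbed.

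The step that fails is in Case 4. You claim that $\mS\mS^{\top}\sum_t\lambda_t\bm{\delta}_t$ on the tagged basis ``reproduces TSV-M's reconstructed low-rank task matrices,'' and the layerwise check you propose cannot establish this. Frobenius orthonormality of the atoms only tells you that the lift equals $\sum_i c_i\,\mU_{\perp,i}^{(\ell)}\mV_{\perp,i}^{(\ell)\top}$ with $c_i=\sum_t\lambda_t\,\mU_{\perp,i}^{(\ell)\top}\bm{\delta}_t^{(\ell)}\mV_{\perp,i}^{(\ell)}$. TSV-M instead outputs $\alpha\sum_i\sigma_i\,\mU_{\perp,i}^{(\ell)}\mV_{\perp,i}^{(\ell)\top}$, where $\sigma_i$ is the truncated singular value of the task that owns atom $i$.

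These coefficients agree, up to the weights $\lambda_t$, only in a special case. The Procrustes step must leave the stacked factors unchanged, and no other task's delta (including its tail beyond rank $k$) may have a component along any atom. In general neither holds. Once Procrustes rotates the factors, generically $\mU_{\perp,i}^{(\ell)\top}\bm{\delta}_{t}^{(\ell)}\mV_{\perp,i}^{(\ell)}\neq\sigma_i$ even for the owning task, and cross-task terms appear as well.

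TSV-M's output is a nonlinear function of $\{\bm{\delta}_t\}$, not the orthogonal projection of $\sum_t\bm{\delta}_t$ onto the span of its atoms. Exact agreement would require replacing $\mS^{\top}\bm{\delta}_t$ by a tagged coefficient vector carrying the owner's singular values, and that is not the template in \eqref{eq: subspace-frechet-mean}. The homogeneous-curvature remark after \eqref{eq: frechet-sum} is just an informal restatement of the same unproved claim, so citing it is circular.

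The paper avoids this by deliberately weakening Case 4. It only shows that an SVD-derived orthonormal $\mS$ with identity metric gives $\vtheta_m=\vtheta_0+\mS\mS^{\top}\sum_t\lambda_t\bm{\delta}_t$. That is Task Arithmetic projected onto a task-singular subspace, i.e.\ the ``subspace-projection scaffold'' of TSV-M. The paper states explicitly that the Procrustes decorrelation is nonlinear in the task vectors and lies outside the quadratic Fr\'echet template.

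Your fallback (``state the case as holding for TSV-M's own subspace and reconstruction convention'') should be replaced by exactly this weaker statement. With that change your proof coincides with the paper's. Your Case 2 observation that the $\vtheta_0$ offset cancels, so the result is the Fisher-weighted average of the $\vtheta_t$ themselves, is a reasonable refinement of a step the paper merely asserts. The same goes for your Case 3 caveat about the prior-Hessian term of Daheim et al.
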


\section{Experiment}
\label{sec: experiment}

We report merging fine-tuned CLIP-ViT models on eight image classification tasks. We find them to reside in low-change-of-loss basins (see PCA loss landscapes in~\cref{appx: loss-landscape}). \Cref{subsec: performance} reports the average top-$1$ accuracy. \Cref{subsec: alpha-sweep-main,subsec: diagnostic-eta,subsec: data-efficiency,subsec: worst-task} respectively study the global rescaling sensitivity with respect to $\alpha$, the curvature heterogeneity diagnostic proposed in~\cref{thm: curvature-advantage}, the data-efficiency of the empirical Fisher diagonal, and worst-task robustness. We present further implementation details and additional results in~\cref{appx: implementation,appx: results}.

\subsection{Experiment Setup}
\label{subsec: setup}

\noindent\textbf{Models and Tasks.} Following prior works~\cite{Wortsman_2022_CVPR,pmlr-v162-wortsman22a,ilharco2023editingmodelstaskarithmetic}, we evaluate model merging on eight image classification tasks: Stanford Cars~\cite{KrauseStarkDengFei-Fei_3DRR2013}, DTD~\cite{cimpoi14describing}, EuroSAT~\cite{helber2019eurosatnoveldatasetdeep}, GTSRB~\cite{6033395}, MNIST~\cite{lecun2010mnist}, RESISC45~\cite{Cheng_2017}, SUN397~\cite{sun397}, and SVHN~\cite{netzer2011reading}. To facilitate a fair comparison, we use publicly released CLIP-ViT checkpoints\footnote{\url{https://github.com/mlfoundations/task_vectors}}. See implementation details in~\cref{appx: implementation}.

\noindent\textbf{Benchmark.} We compare EpiMer against existing \textit{flat-geometry methods}, including Arithmetic Mean (AM)~\cite{Wortsman_2022_CVPR}, Task Arithmetic~\cite{ilharco2023editingmodelstaskarithmetic}, TIES-Merging~\citep{NEURIPS2023_1644c9af}, and TSV-M~\cite{gargiulo2025tasksingularvectorsreducing}, as well as the \textit{curvature-aware} Fisher-weighted Averaging~\citep{NEURIPS2022_70c26937}. Linear probing and per-task fine-tuning anchor the table as reference baselines. We report average top-$1$ accuracy across the eight tasks as the main metric in~\cref{tab: performance}.

\noindent\textbf{Empirical Fisher diagonal.} The publicly released CLIP-ViT checkpoints ship only the converged parameters $\vtheta_{t}$, not the optimizer states. We therefore need to reconstruct the empirical Fisher diagonal $\vv_{t}$ used by~\eqref{eq: hessian-approx} and~\eqref{eq: projected-hessian-fisher} by a single forward-backward pass over each task's training set at $\vtheta_{t}$, accumulating $(\nabla_{\vtheta}\gL_t(\vx,\vtheta_t))^{2}$ on the fly. The same procedure supplies Fisher-weighted Averaging, so both curvature-aware methods consume identical curvature inputs. \Cref{subsec: data-efficiency} further validates that this estimate is robust to a tiny fraction of the training data.

\begin{table}[!t]
\centering
\caption{Average top-$1$ accuracy across the eight image classification tasks for each merged CLIP-ViT variant. Best merging method per column is \textbf{bold}, second best is \underline{underlined}. All baselines use $\lambda_t = 1/T$, so AM and TA collapse to the same merged delta and are listed jointly. TSV-M and EpiMer are reported at $k=32$ and the per-method optimal global rescaling $\alpha$ from~\cref{subsec: alpha-sweep-main}: TSV-M uses $\alpha=0.70$ on ViT-B/32 and $\alpha=1.0$ on ViT-B/16 and ViT-L/14; EpiMer uses $\alpha=1.0$ on all three backbones. Per-task breakdowns on every dataset are deferred to~\cref{tab: performance-full}. On ViT-L/14, EpiMer exceeds TSV-M by $0.06$ percentage points ($0.9065$ vs $0.9059$); both display as $.906$ at the three-decimal precision of this table.}
\label{tab: performance}
\begin{tabular}{@{}lccc@{}}
\toprule
\textbf{Method} & \textbf{ViT-B/32} & \textbf{ViT-B/16} & \textbf{ViT-L/14} \\
\midrule
\multicolumn{4}{l}{\textit{Baselines}} \\
Linear-Probing & .480 & .553 & .649 \\
Fine-tuning    & .909 & .929 & .943 \\
\midrule
\multicolumn{4}{l}{\textit{Flat-geometry methods}} \\
AM~\cite{Wortsman_2022_CVPR} / TA~\cite{ilharco2023editingmodelstaskarithmetic}  & .653 & .710 & .791 \\
TIES~\cite{NEURIPS2023_1644c9af}                                                 & .725 & .774 & .859 \\
TSV-M~\cite{gargiulo2025tasksingularvectorsreducing}                             & \underline{.822} & \underline{.865} & \underline{.906} \\
\midrule
\multicolumn{4}{l}{\textit{Curvature-aware methods}} \\
Fisher~\cite{NEURIPS2022_70c26937} & .539 & .625 & .720 \\
\textbf{EpiMer (Ours)}    & \textbf{.833} & \textbf{.870} & \textbf{.906} \\
\bottomrule
\end{tabular}
\end{table}

\begin{table*}[!t]
    \centering
    \caption{Ablation results on global rescaling $\alpha$ sensitivity at $k=32$ on all three backbones. Best per row is \textbf{bold}. We report more results with rank $r\in\{4,16\}$ in~\cref{appx: alpha-sweep}.}
    \begin{tabular}{@{}llcccccc@{}}
    \toprule
    \textbf{Backbone} & \textbf{Method} & $\alpha{=}0.20$ & $\alpha{=}0.30$ & $\alpha{=}0.40$ & $\alpha{=}0.50$ & $\alpha{=}0.70$ & $\alpha{=}1.00$ \\
    \midrule
    \multirow{2}{*}{ViT-B/32} & TSV-M             & .630 & .699 & .750 & .787 & \textbf{.822} & .820 \\
                              & EpiMer   & .601 & .670 & .724 & .764 & .812 & \textbf{.833} \\
    \midrule
    \multirow{2}{*}{ViT-B/16} & TSV-M             & .688 & .747 & .792 & .822 & .857 & \textbf{.865} \\
                              & EpiMer   & .666 & .724 & .771 & .804 & .846 & \textbf{.870} \\
    \midrule
    \multirow{2}{*}{ViT-L/14} & TSV-M             & .772 & .816 & .849 & .870 & .895 & \textbf{.906} \\
                              & EpiMer   & .766 & .808 & .841 & .863 & .890 & \textbf{.906} \\
    \bottomrule
    \end{tabular}
    \label{tab: alpha-sweep-r32}
\end{table*}

\subsection{Merged Model Performance}
\label{subsec: performance}

\Cref{tab: performance} presents the across-task average top-$1$ accuracy of each merged model on the three backbones. Results show that the proposed EpiMer is the best merging method on every backbone, beating TSV-M by $1.10\%$, $0.48\%$, and $0.06\%$ on ViT-B/32, ViT-B/16, and ViT-L/14 (at rank $k=32$ and the per-method optimal $\alpha$ from~\cref{subsec: alpha-sweep-main}), and TIES-Merging by $10.8\%$, $9.6\%$, and $4.7\%$, respectively. The Fisher Averaging baseline collapses on every backbone, confirming that a diagonal Fisher in the full parameter space is \emph{too coarse a curvature estimate}, and the proposed EpiMer manages to address the issue by subspace projection. Furthermore, we highlight the key roles of subspace projection and curvature awareness in the improved performance.

\noindent\textbf{Role of Subspace.} Against the non-spectral baselines (\emph{i.e., AM/TA, TIES, Fisher Averaging}), we note that subspace projection plays an important role. It compresses each task vector into the union of its top-$k$ singular directions removes the noise floor and concentrates the merge where any task has signal. On ViT-B/32, projecting AM/TA's delta onto the same tagged basis already lifts accuracy from $0.653\%$ to TSV-M's $0.822$\%. Our curvature-aware solution then further pushes the performance to $0.833\%$. The curvature-aware aggregator is a second-order refinement that is most important when the projected per-task Hessians are indeed heterogeneous.

\noindent\textbf{Role of Curvature.} TSV-M and EpiMer construct the merged delta on the identical per-task tagged basis $\gB^{(\ell)}$ from~\eqref{eq: subspace-basis} and differ only in the aggregation. Herein, TSV-M uses an isotropic metric, whereas EpiMer applies the matrix-weighted solve in~\eqref{eq: frechet-sum} with the projected per-task Hessian. We argue that the shrinking margin on larger backbones reflects a \emph{saturation effect} rather than a failure of curvature awareness.  Specifically, TSV-M on ViT-L/14 already reaches $0.906\%$ against the $0.943\%$ fine-tuning ceiling, leaving limited improvements for any second-order refinement. \Cref{subsec: diagnostic-eta} disentangles the diagnostic prediction from saturation by sweeping the per-task rank $k$ \emph{within} each backbone and tracking $\eta$ alongside the observed gap.

\subsection{Global Rescaling Sensitivity}
\label{subsec: alpha-sweep-main}

Both TSV-M and the matrix-weighted aggregator in~\eqref{eq: frechet-sum} exploit a global rescaling $\alpha$ that controls the Frobenius norm of the merged delta. The default $\alpha = 1/\sqrt{T}$ recommended by~\cite{gargiulo2025tasksingularvectorsreducing} is the natural choice when the per-task tagged contributions are nearly orthogonal. Nevertheless, we find it severely under-tuned on the eight-task CLIP-ViT setup of~\cref{tab: performance}. At a rank $k=32$, the optimum lies in $\alpha\in[0.7,1.0]$ for both methods, which is around $9\%$--$13\%$ above the $1/\sqrt{T}=1/\sqrt{8}\approx 0.354$ default. To make the comparison in~\cref{tab: performance} as fair as possible, we only report numbers with the per-method optimal $\alpha$. In addition, we conduct an ablation study and~\cref{tab: alpha-sweep-r32} reports the sweep at $k=32$ on every backbone. We also report the matching $k\in\{4,16\}$ results in~\cref{appx: alpha-sweep}.

Two observations emerge from the results. First, the per-method optimum lies in $\alpha\in[0.7,1.0]$ on both methods, far from the literature default $1/\sqrt{T}\approx 0.354$. Thereby, tuning $\alpha$ alone closes most of the gap to fine-tuning. Meanwhile, at the optimum, the proposed EpiMer dominates TSV-M on every backbone and at every rank in~\cref{appx: alpha-sweep}. The matrix-weighted aggregator never falls behind, even in the saturation regime on ViT-L/14, where TSV-M is already within $3.7\%$ of the per-task fine-tuning ceiling.

\subsection{Curvature Heterogeneity Diagnostic}
\label{subsec: diagnostic-eta}

Our~\cref{thm: curvature-advantage} predicts that the improvement of curvature-aware merging over flat-geometry merging is governed by the curvature heterogeneity $\eta = \vc^{\intercal}\bar{\mH}^{-1}\vc$. We investigate this diagnostic on a per-task, tagged basis, as used by both TSV-M and EpiMer. At each rank $k$, we project the per-task task vectors and Hessians into the basis of~\eqref{eq: subspace-basis}, normalize each Hessian by trace, and compute $\eta$. \Cref{fig: rank-vs-eta-and-margin} reports $\eta$ alongside the observed accuracy margin (EpiMer $-$ TSV-M, both at their per-method optimal $\alpha$) for $k\in\{2,4,8,16,32\}$ on all three backbones.

Within each backbone, $\eta$ rises monotonically with $k$ as the basis admits more curvature heterogeneity, and EpiMer maintains a positive margin over TSV-M at every $k$ across all backbones. However, the cross-backbone ordering of $\eta$ \emph{does not} predict the cross-backbone ordering of the improvement margin. Specifically, ViT-L/14 has the largest $\eta$ at every rank yet the smallest margin. We argue that this is because its TSV-M baseline already sits within $3.7\%$ of the per-task fine-tuning ceiling. The diagnostic therefore behaves as a within-backbone signal of how much theoretical room the curvature-aware aggregator has to operate, modulated by saturation and by the global $\alpha$ tuning.

\begin{figure}[!t]
    \centering
    \includegraphics[width=\linewidth]{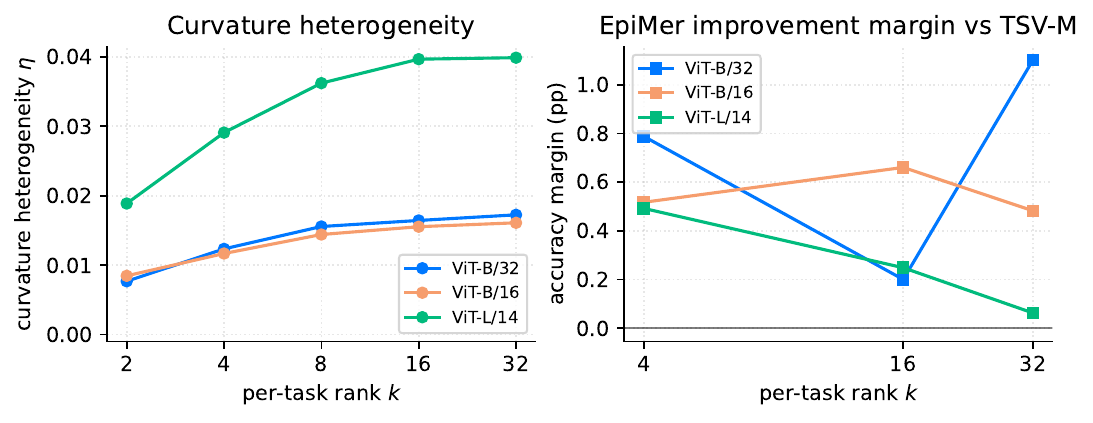}
    \caption{Curvature heterogeneity diagnostic at varying per-task ranks $k$. \textbf{Left:} $\eta = \vc^{\intercal}\bar{\mH}^{-1}\vc$ on the tagged basis of~\eqref{eq: subspace-basis}. \textbf{Right:} per-task mean accuracy margin in percentage at the per-method optimal $\alpha$.
    Within each backbone, $\eta$ rises monotonically with respect to the rank $k$.
    }
    \label{fig: rank-vs-eta-and-margin}
\end{figure}

\subsection{Empirical Fisher Robustness}
\label{subsec: data-efficiency}

A natural concern in the proposed method is whether the per-task empirical Fisher diagonal $\vv_{t}$ in~\eqref{eq: hessian-approx} and~\eqref{eq: projected-hessian-fisher} requires the full training set. We sweep the subsample fraction $f\in\{0.5\%,1\%,5\%,10\%,25\%,50\%,100\%\}$ and plug the resulting $\{\vv_{t}^{(f)}\}$ into EpiMer at $k=32, \alpha=1.0$ on every backbone. See detailed setup in~\cref{appx: small-data}.

\begin{figure}[!t]
    \centering
    \includegraphics[width=\linewidth]{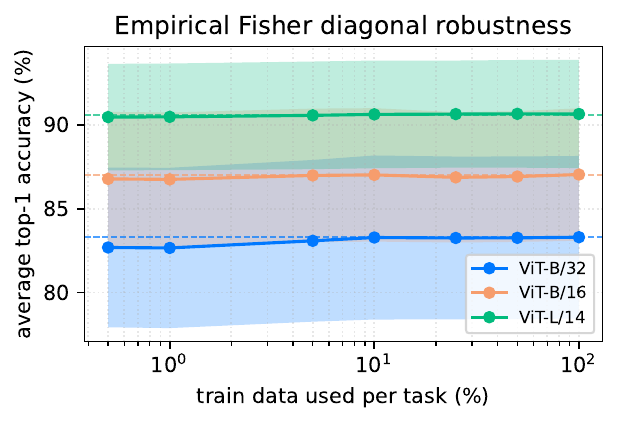}
    \caption{Average top-$1$ accuracy across the eight image classification tasks against different subsample fraction used to construct the per-task empirical Fisher diagonal $\vv_{t}$. Shaded bands show mean $\pm$ one standard error of the mean across the eight tasks; dashed lines mark the full-data accuracy from~\cref{tab: performance}.
    }
    \vspace{-2em}
    \label{fig: fisher subsample}
\end{figure}

\Cref{fig: fisher subsample} reports the sweep. The merged-model accuracy is essentially flat: even at $f=0.5\%$ (one to six batches per task at batch size $64$), accuracy lands within $\sim 0.7\%$ of the full-data value, and by $f=10\%$ the curve has saturated. The diagonal of $\E_{\vx}[(\nabla\gL)^{2}]$ converges quickly at the converged parameters, so we consider the curvature-aware solve in~\eqref{eq: frechet-sum} \emph{robust} to its only data-dependent input, with data requirements orders of magnitude smaller than test-time-adaptation baselines~\cite{yang2024adamergingadaptivemodelmerging}.

\subsection{Worst-Task Robustness}
\label{subsec: worst-task}

Another interesting question will be: \emph{does the curvature-aware gain come at the cost of already-weak tasks?} \Cref{tab: worst-task} reports the minimum per-task top-$1$ accuracy, extracted from~\cref{tab: performance-full}.
EpiMer lifts worst-task accuracy by $0.8\%$, $2.4\%$, and $0.1\%$ over TSV-M, and by $13.3\%$, $+15.1\%$, and $8.8\%$ over TIES. Therefore, curvature awareness helps tightens the worst-task margin on every backbone while keeping the average strictly ahead.

\begin{table}[!t]
    \centering
    \caption{Worst-task top-$1$ accuracy on the eight-task CLIP-ViT benchmark, i.e.~the minimum over the eight per-task cells in~\cref{tab: performance-full}. Best merging method per column is \textbf{bold}, second best is \underline{underlined}. EpiMer has the highest worst-task accuracy among all merging methods on every backbone.}
    \label{tab: worst-task}
    \resizebox{.8\linewidth}{!}{%
    \begin{tabular}{@{}lccc@{}}
    \toprule
    \textbf{Method} & \textbf{ViT-B/32} & \textbf{ViT-B/16} & \textbf{ViT-L/14} \\
    \midrule
    AM~\cite{Wortsman_2022_CVPR} / TA~\cite{ilharco2023editingmodelstaskarithmetic}    & .506 & .504 & .619 \\
    TIES~\cite{NEURIPS2023_1644c9af}                                                    & .532 & .559 & .675 \\
    Fisher~\cite{NEURIPS2022_70c26937}                                                  & .338 & .408 & .507 \\
    TSV-M~\cite{gargiulo2025tasksingularvectorsreducing}                                & \underline{.657} & \underline{.686} & \underline{.762} \\
    \textbf{EpiMer (Ours)}                                                      & \textbf{.665} & \textbf{.710} & \textbf{.763} \\
    \bottomrule
    \end{tabular}
    }
    \vspace{-1em}
\end{table}

\section{Related Works}
\label{sec: related}

\noindent\textbf{Delta-parameter merging.} Delta-parameter methods edit the residual from a shared pre-trained checkpoint. Task Arithmetic~\cite{ilharco2023editingmodelstaskarithmetic} scales and sums per-task deltas with fixed coefficients. TIES-Merging~\cite{NEURIPS2023_1644c9af} resolves cross-task interference by pruning redundant entries and aligning conflicting signs, while DARE~\cite{yu2024languagemodelssupermario,deng2025dareextremerevisitingdeltaparameter} randomly drops delta entries and rescales the survivors.

\noindent\textbf{Spectral and subspace methods.} Spectral methods project the merged delta onto a shared low-rank subspace of the task vectors. TSV-M~\cite{gargiulo2025tasksingularvectorsreducing} takes the truncated SVD of each task vector and concatenates the top singular directions into a joint per-task basis, and Isotropic Merging~\cite{marczak2025taskleftbehindisotropic} decomposes the merged delta into task-common and task-specific components, each equalized to isotropic covariance.

\noindent\textbf{Loss landscape and epistemic uncertainty.} Loss-landscape analyses motivate curvature-aware merging. Mode connectivity~\cite{vrabel2024inputspacemodeconnectivity,garipov2018losssurfacesmodeconnectivity} shows that independently trained minima can be linked by low-loss paths that connect task-specific basins. Fisher-Weighted Averaging~\cite{NEURIPS2022_70c26937} uses the Fisher information matrix as a local uncertainty estimator to weight task parameters, which~\cite{daheim2024modelmerginguncertaintybasedgradient} later generalizes through a gradient-mismatch objective.

\section{Conclusion}
\label{sec: conclusion}

In this paper, we propose EpiMer, a framework that recasts model merging as the Fr\'echet mean on a Riemannian manifold restricted to the task-vector subspace, where the projected Hessian can be computed exactly. Our analysis decomposes merging error into subspace variance and residual energy (\cref{thm: error-bound}), characterizes when curvature helps (\cref{thm: curvature-advantage}), and unifies curvature-aware and spectral methods under one solve (\cref{prop: subsumption}). The diagnostic $\eta$ captures in-subspace loss reduction but not saturation, so high-$\eta$ regimes may still show small margins. Interesting follow-up directions include generalization-aware refinements and LoRA extensions of the proposed method.

\section*{Acknowledgement}
\label{sec: acknowledgement}
We appreciate the Purdue Rosen Center for Advanced Computing for granting access to GPU clusters. We thank Brandyn White for his insightful review and suggestions.

    {
        \small
        \putbib[references/ieee_abbr,references/juanwu]
    }
\end{bibunit}

\clearpage
\appendix
\setcounter{page}{1}
\maketitlesupplementary

\begin{bibunit}
\begin{table*}[!t]
\centering
\caption{Table of notation.}
\resizebox{\textwidth}{!}{%
\begin{tabular}{@{}cl@{\hspace{3em}}cl@{}}
\toprule
    \textbf{Notation} & \textbf{Explanation} & \textbf{Notation} & \textbf{Explanation} \\
    \midrule
    $n$ & Dimension of the input data space. & $T$ & Number of fine-tuning tasks. \\
    $m$ & Dimension of the parameter space. & $L$ & Number of matrix-shaped layers in the backbone. \\
    $d_{\ell}$ & Dimension of layer $\ell$ (square layers). & $k$ & Per-task SVD truncation rank (also denoted $r$). \\
    $p$ & Subspace dimension, $p=L k T$. & $\Theta$ & Parameter space, with $\Theta\subseteq\mathbb{R}^{m}$. \\
    \addlinespace[0.35em]
    $\lambda_{t}$ & Merging weight of task $t$, with $\sum_{t}\lambda_{t}=1$. & $\alpha$ & Global Frobenius rescaling of the merged delta. \\
    $\eta$ & Curvature heterogeneity diagnostic $\vc^{\top}\bar{\mH}^{-1}\vc$. & $\tau$ & Geodesic parameter in $[0,1]$. \\
    \addlinespace[0.35em]
    $\vzero$ & All-zero vector or matrix. & $\mI_{p}$ & Identity matrix of size $p$ (likewise $\mI_{m}$). \\
    \addlinespace[0.35em]
    $\vx$ & Input data sample in $\mathbb{R}^{n}$. & $\vtheta_{0}$ & Pre-trained parameters. \\
    $\vtheta_{t}$ & Fine-tuned parameters for task $1\leq{t}\leq{T}$. & $\vtheta_{m}$ & Merged parameters. \\
    \addlinespace[0.35em]
    $\bm{\delta}_{t}$ & Task vector $\vtheta_{t}-\vtheta_{0}$ for task $t$. & $\bm{\delta}_{m}$ & Merged displacement $\vtheta_{m}-\vtheta_{0}$. \\
    $\bm{\delta}_{t}^{\perp}$ & Residual of $\bm{\delta}_{t}$ orthogonal to the subspace, $(\mI_{m}-\mS\mS^{\top})\bm{\delta}_{t}$. & $\tilde{\bm{\delta}}_{t}$ & Projected task vector $\mS^{\top}\bm{\delta}_{t}$ in the subspace. \\
    $\tilde{\bm{\delta}}_{m}$ & Merged displacement in the subspace, $\mS^{\top}\bm{\delta}_{m}$. & $\bar{\bm{\delta}}$ & Weighted average $\sum_{t}\lambda_{t}\tilde{\bm{\delta}}_{t}$. \\
    $\tilde{\bm{\delta}}_{I}$ & Flat-geometry (identity-metric) subspace solution, $\tilde{\bm{\delta}}_{I}=\bar{\bm{\delta}}$. & $\tilde{\bm{\delta}}_{H}$ & Curvature-aware subspace Fr\'echet mean, $\bar{\mH}^{-1}\!\sum_{t}\lambda_{t}\tilde{\mH}_{t}\tilde{\bm{\delta}}_{t}$. \\
    \addlinespace[0.35em]
    $\mS$ & Subspace matrix $\mS\in\mathbb{R}^{m\times p}$ with $\mS^{\top}\mS=\mI_{p}$. & $\mS^{(\ell)}$ & Layer-$\ell$ block of $\mS$ in $\mathbb{R}^{d_{\ell}^{2}\times kT}$. \\
    \addlinespace[0.35em]
    $\mU_{\perp}^{(\ell)}$ & Procrustes-whitened left factors in $\mathbb{R}^{d_{\ell}\times{kT}}$. & $\mV_{\perp}^{(\ell)}$ & Procrustes-whitened right factors in $\mathbb{R}^{d_{\ell}\times{kT}}$. \\
    $\mU_{\perp,i}^{(\ell)}$ & $i$-th column of $\mU_{\perp}^{(\ell)}$ (an atom of the tagged basis). & $\mV_{\perp,i}^{(\ell)}$ & $i$-th column of $\mV_{\perp}^{(\ell)}$ (an atom of the tagged basis). \\
    \addlinespace[0.35em]
    $\mF_{t}$ & Fisher information matrix of task $t$. & $\mH_{t}$ & Expected Hessian of task $t$ evaluated at $\vtheta_{t}$. \\
    $\tilde{\mH}_{t}$ & Projected per-task Hessian $\mS^{\top}\mH_{t}\mS\in\mathbb{R}^{p\times p}$. & $\tilde{\mH}_{t}^{(\ell)}$ & Per-layer projected Hessian $\mS^{(\ell)\top}\mH_{t}^{(\ell)}\mS^{(\ell)}\in\mathbb{R}^{kT\times kT}$. \\
    $\bar{\mH}$ & Weighted average of projected Hessians $\sum_{t=1}^{T}\lambda_{t}\tilde{\mH}_{t}$. & $\vc$ & Curvature-task correlation $\sum_{t}\lambda_{t}(\tilde{\mH}_{t}-\bar{\mH})(\tilde{\bm{\delta}}_{t}-\bar{\bm{\delta}})$. \\
    \addlinespace[0.35em]
    $\vu$ & First-moment vector in the optimizer state. & $\vv$ & Second-moment vector in the optimizer state. \\
    $\vv_{t}$ & Empirical Fisher diagonal of task $t$, $\E[(\nabla\gL_{t})^{2}]$. & & \\
    \addlinespace[0.35em]
    $\gM$ & Differentiable manifold representing the parameter space. & $\tG(\cdot)$ & Riemannian metric tensor field on $\gM$. \\
    $\gamma(\cdot)$ & Geodesic path $\gamma:[0,1]\to\gM$. & & \\
    \addlinespace[0.35em]
    $\gL_{t}(\vx,\vtheta)$ & Loss of task $t$ at input $\vx$ with parameter $\vtheta$. & $\gJ(\vtheta)$ & Multi-task objective $\sum_{t}\lambda_{t}\E[\gL_{t}]$. \\
    $\gF(\tilde{\bm{\delta}})$ & Subspace Fr\'echet objective, a quadratic in $\tilde{\bm{\delta}}$. & & \\
    \addlinespace[0.35em]
    $\gD$ & Collection of all task-specific datasets $\{\gD_{1},\ldots,\gD_{T}\}$. & $\gD_{t}$ & Task-$t$ training distribution. \\
    \addlinespace[0.35em]
    $\gB^{(\ell)}$ & Per-task tagged basis of rank-1 atoms at layer $\ell$. & $\gV_{S}$ & Subspace Fr\'echet variance (irreducible task conflict). \\
    $\gR_{S}$ & Residual energy $\sum_{t}\lambda_{t}(\bm{\delta}_{t}^{\perp})^{\top}\mH_{t}\bm{\delta}_{t}^{\perp}$ lost by projection. & & \\
    \addlinespace[0.35em]
    $\sC^{2}$ & Class of twice continuously differentiable functions. & $\mathbb{R}$ & Real numbers. \\
    \bottomrule
    \end{tabular}
}
\label{tab: notation}
\end{table*}

\section{Notations}
\label[appendix]{appx: notation}

To improve the readability for the general audience, Table~\ref{tab: notation} lists all the notation used in this paper.

\section{Additional Details for Methodology}
\label[appendix]{appx: derivations}

\subsection{Proof for Proposition 1}

The following provides a proof for proposition~\ref{prop: equiv-obj}.

\begin{proof}
    Let the global multi-task objective function be the weighted sum of the individual task losses, given by
    \begin{equation}
        \gJ(\vtheta)=\sum\limits_{t=1}^{T}\lambda_{t}\E_{\vx\sim\gD_{t}}\left[\gL_{t}(\vx, \vtheta)\right].
    \end{equation}
    Our objective is to find the optimal parameter $\vtheta_{m}$ that minimizes the objective above. To achieve this, we first consider the Taylor expansion of the expected loss for a specific task $t$ around its fine-tuned optimum $\vtheta_{t}$. Denote $L_{t}(\vtheta)=\E_{\vx\sim\gD_{t}}\left[\gL_{t}(\vx,\vtheta)\right]$ for simplicity, the second-order Taylor expansion at $\vtheta_{t}$ gives
    \begin{equation}
        \begin{aligned}
        &L_{t}(\vtheta)\approx \\
        &L_{t}(\vtheta_{t}) + (\vtheta-\vtheta_{t})^{\top}\nabla_{\vtheta}L_{t}(\vtheta_{t}) + \frac{1}{2}(\vtheta-\vtheta_{t})^{\top}\mH_{t}(\vtheta-\vtheta_{t}),
        \end{aligned}
    \end{equation}
    Based on our assumption~\ref{asm: local-opt} in the main paper, each fine-tuned model resides at a local minimum, which implies the first-order derivative vanishes, i.e., $\nabla_{\vtheta}L_{t}(\vtheta_{t})\approx\vzero$. Therefore, the expansion reduces to
    \begin{equation}
        L_{t}(\vtheta)\approx L_{t}(\vtheta_{t}) + \frac{1}{2}(\vtheta-\vtheta_{t})^{\top}\mH_{t}(\vtheta-\vtheta_{t}).
    \end{equation}
    Substituting the above approximation back into the global objective function, we have
    \begin{equation}
        \begin{aligned}
        \gJ(\vtheta)&\approx\sum\limits_{t=1}^{T}\lambda_{t}\left(L_{t}(\vtheta_{t})+\frac{1}{2}(\vtheta-\vtheta_{t})^{\top}\mH_{t}(\vtheta-\vtheta_{t})\right) \\
        &=\frac{1}{2}\sum\limits_{t=1}^{T}\lambda_{t}(\vtheta-\vtheta_{t})^{\top}\mH_{t}(\vtheta-\vtheta_{t}) + \text{const}.
        \end{aligned}
    \end{equation}
    Herein, since the constant term does not affect the optimization, minimizing the global objective $\gJ(\vtheta)$ is equivalent to minimizing the quadratic form.

    Recall that the Fr\'echet Mean objective defined in~\eqref{eq: frechet-objective} is essentially finding a point $\vtheta_{m}$ that minimizes the weighted sum of squared geodesic distances to all fine-tuned models. In a small neighborhood around each $\vtheta_t$, the squared geodesic distance $d_{\tG_{t}}^2(\theta, \theta_t)$ on a Riemannian manifold with metric tensor $\tG_{t} = \mH_t$ can be approximated by the quadratic form
    \begin{equation}
        d_{\tG_{t}}^2(\vtheta, \vtheta_t) \approx (\vtheta - \vtheta_t)^\top \mH_t (\vtheta - \vtheta_t)
    \end{equation}
    Substituting this approximation into the Fréchet Mean objective yields
    \begin{equation}
        \vtheta_m \approx \underset{\vtheta\in\Theta}{\arg\min} \sum_{t=1}^{T} \lambda_t (\vtheta - \vtheta_t)^\top \mH_t (\vtheta - \vtheta_t).
    \end{equation}
    Comparing the two approximations, it is obvious that the objective functions are identical (up to a constant scaling factor of $1/2$). Thus, minimizing the multi-task loss is locally equivalent to finding the Fréchet Mean on the manifold endowed with the Hessian metric.
\end{proof}

\subsection{Closed-Form Fr\'echet Mean \texorpdfstring{in~\eqref{eq: affine-sol}}{}}
Denote the objective in~\eqref{eq: affine-obj} by $\mathcal{J}(\boldsymbol{\delta}_{m})$. To minimize the objective, we take the first- and second-order derivatives with respect to $\boldsymbol{\delta}_{m}$,
\begin{equation}
    \frac{\partial}{\partial{\boldsymbol{\delta}_{m}}}\mathcal{J}(\boldsymbol{\delta}_{m})
    =2\sum\limits_{t=1}^{T}\lambda_{t}\mH_{t}\bigl(\boldsymbol{\delta}_{m} - \boldsymbol{\delta}_{t}\bigr),
    \label{eq: affine-obj-nabla}
\end{equation}
\begin{equation}
    \frac{\partial^{2}}{\partial\boldsymbol{\delta}_{m}^{2}}\mathcal{J}(\boldsymbol{\delta}_{m})
    =2\sum\limits_{t=1}^{T}\lambda_{t}\mH_{t}.
\end{equation}
Based on Assumption~\ref{asm: local-opt}, the Hessian evaluated at each local minimum is positive-definite, $\mH_{t}\succ{0}$. With non-negative scalar factors $\lambda_{t}\geq{0}$, the Hessian of $\mathcal{J}$ is positive-definite, making it strictly convex. Therefore, we can set~\eqref{eq: affine-obj-nabla} to zero and solve for the optimal $\boldsymbol{\delta}_{m}^{\ast}$ via
\begin{equation}
    \sum\limits_{t=1}^{T}\lambda_{t}\mH_{t}\boldsymbol{\delta}_{m}^{\ast}
    =\sum\limits_{t=1}^{T}\lambda_{t}\mH_{t}\boldsymbol{\delta}_{t},
\end{equation}
which yields the closed-form solution in~\eqref{eq: affine-sol}.

\subsection{Empirical Fisher Diagonal \texorpdfstring{in~\eqref{eq: hessian-approx}}{}}

Our goal is to approximate the expected Hessian $\mH_{t}$ at the converged parameters $\vtheta_{t}$. For models trained with a supervised log-likelihood objective, the expected Hessian coincides with the Fisher information matrix to second order, which equals the expected outer product of the per-sample gradient,
\begin{equation}
    \mH_{t}
    \;\approx\;\E_{\vx\sim\gD_{t}}\!\left[\nabla_{\vtheta}\gL_{t}(\vx,\vtheta_{t})\nabla_{\vtheta}\gL_{t}(\vx,\vtheta_{t})^{\top}\right].
\end{equation}
The off-diagonal entries of this $m\times m$ matrix are intractable to store at modern model scales. Hence, we propose approximate it by its diagonal,
\begin{equation}
    \mH_{t}\;\approx\;\text{diag}(\vv_{t}),\quad \vv_{t}\;=\;\E_{\vx\sim\gD_{t}}\!\bigl[(\nabla_{\vtheta}\gL_{t}(\vx,\vtheta_{t}))^{2}\bigr],
\end{equation}
which is exactly the empirical Fisher diagonal. The following~\Cref{appx: small-data} verifies that $\vv_{t}$ is robust to estimation from a tiny ($\leq 0.5\%$) subsample of the training set, so this proxy is essentially data-free in practice. The diagonal proxy by itself is degenerate in the full $m$-dimensional space due to the overparameterization of large neural network and that every per-task curvature shares the coordinate eigenbasis. Nevertheless,~\eqref{eq: projected-hessian-fisher} shows that projecting it onto the per-task tagged basis $\mS$ produces a \emph{dense} $p\times p$ matrix that recovers the off-diagonal curvature signal that the ambient diagonal discards.

\subsection{Proofs for the EpiMer Theory}
\label[appendix]{appx-subsec: subspace-proofs}

This subsection proves \cref{thm: existence,thm: error-bound,thm: curvature-advantage,prop: subsumption} stated in~\cref{subsec: theory}. Throughout, we work in the $p$-dimensional subspace defined by the orthonormal basis $\mS\in\mathbb{R}^{m\times p}$ ($\mS^{\top}\mS = \mI_p$) and write $\tilde{\boldsymbol{\delta}}_t = \mS^{\top}\boldsymbol{\delta}_t$, $\boldsymbol{\delta}_t^{\perp} = (\mI_m - \mS\mS^{\top})\boldsymbol{\delta}_t$, and $\bar{\mH} = \sum_{t=1}^T \lambda_t \tilde{\mH}_t$. We assume the merging coefficients are normalized so that $\sum_{t=1}^T \lambda_t = 1$, matching the convention used by every method we compare against (including our own experiments, which use $\lambda_t = 1/T$).

\subsubsection{Proof of Theorem~\ref{thm: existence} (Existence and Uniqueness)}

\begin{proof}
Expanding the merging objective gives a quadratic form in $\tilde{\boldsymbol{\delta}}$:
\begin{equation}
    \gF(\tilde{\boldsymbol{\delta}})
    = \tilde{\boldsymbol{\delta}}^{\top}\bar{\mH}\,\tilde{\boldsymbol{\delta}}
    \;-\; 2\,\tilde{\boldsymbol{\delta}}^{\top}\sum_{t=1}^T\lambda_t\tilde{\mH}_t\tilde{\boldsymbol{\delta}}_t
    \;+\; \underbrace{\sum_{t=1}^T\lambda_t\tilde{\boldsymbol{\delta}}_t^{\top}\tilde{\mH}_t\tilde{\boldsymbol{\delta}}_t}_{\text{constant in } \tilde{\boldsymbol{\delta}}}.
    \label{eq: frechet-quadratic}
\end{equation}
Differentiating twice yields $\nabla^2 \gF(\tilde{\boldsymbol{\delta}}) = 2\bar{\mH}$. By assumption $\bar{\mH}\succ 0$, so the Hessian is symmetric positive-definite and $\gF$ is strictly convex on $\mathbb{R}^p$. Consequently, $\gF$ admits a unique global minimizer, and that minimizer is the unique stationary point. Setting the gradient to zero,
\begin{equation}
    \nabla \gF(\tilde{\boldsymbol{\delta}}_m^*)
    = 2\bar{\mH}\,\tilde{\boldsymbol{\delta}}_m^* - 2\sum_{t=1}^T\lambda_t\tilde{\mH}_t\tilde{\boldsymbol{\delta}}_t = \vzero,
\end{equation}
and using invertibility of $\bar{\mH}$,
\begin{equation}
    \tilde{\boldsymbol{\delta}}_m^*
    = \bar{\mH}^{-1}\sum_{t=1}^T\lambda_t\tilde{\mH}_t\tilde{\boldsymbol{\delta}}_t
    = \left(\sum_{t=1}^T\lambda_t\tilde{\mH}_t\right)^{-1}\!\left[\sum_{t=1}^T\lambda_t\tilde{\mH}_t\tilde{\boldsymbol{\delta}}_t\right],
\end{equation}
which is exactly~\eqref{eq: subspace-frechet-mean}.
\end{proof}

\paragraph{Remark.} In our empirical-Fisher implementation we replace $\mH_t$ by $\text{diag}(\vv_t) + \epsilon\mI_m$ with a small jitter $\epsilon > 0$ before projecting (see \texttt{ensure\_psd} in \texttt{hessian.py}). This guarantees $\tilde{\mH}_t \succ 0$ for every task and hence $\bar{\mH}\succ 0$, so the existence/uniqueness hypothesis holds unconditionally in practice.

\subsubsection{Proof of Theorem~\ref{thm: error-bound} (Merging Error Bound)}

\begin{proof}
By Assumption~\ref{asm: smooth} the loss $\gL_t$ is $C^2$, so a second-order Taylor expansion of $\gL_t$ around $\vtheta_t$ gives
\begin{equation}
    \begin{aligned}
    &\gL_t(\vtheta_m) - \gL_t(\vtheta_t) =\\
    &\nabla\gL_t(\vtheta_t)^{\top}(\vtheta_m - \vtheta_t)
    + \tfrac{1}{2}(\vtheta_m - \vtheta_t)^{\top}\mH_t(\vtheta_m - \vtheta_t)
    + R_3^{(t)},
    \end{aligned}
\end{equation}
with cubic remainder $R_3^{(t)} = \gO(\|\vtheta_m - \vtheta_t\|^3)$. \Cref{asm: local-opt} kills the linear term, leaving
\begin{equation}
    \gL_t(\vtheta_m) - \gL_t(\vtheta_t)
    = \tfrac{1}{2}(\vtheta_m - \vtheta_t)^{\top}\mH_t(\vtheta_m - \vtheta_t) + R_3^{(t)}.
    \label{eq: per-task-quadratic}
\end{equation}
Then, we can decompose $\vtheta_m - \vtheta_t$ along the subspace $\mS$ and its orthogonal complement. Since $\vtheta_m = \vtheta_0 + \mS\tilde{\boldsymbol{\delta}}_m^*$ and $\vtheta_t = \vtheta_0 + \boldsymbol{\delta}_t = \vtheta_0 + \mS\tilde{\boldsymbol{\delta}}_t + \boldsymbol{\delta}_t^{\perp}$, we have
\begin{equation}
    \vtheta_m - \vtheta_t
    = \mS\bigl(\tilde{\boldsymbol{\delta}}_m^* - \tilde{\boldsymbol{\delta}}_t\bigr) - \boldsymbol{\delta}_t^{\perp}.
    \label{eq: parallel-perp-decomp}
\end{equation}
Let $\Delta_t \triangleq \tilde{\boldsymbol{\delta}}_m^* - \tilde{\boldsymbol{\delta}}_t$. Substituting~\eqref{eq: parallel-perp-decomp} into the quadratic form and using $\mS^{\top}\mH_t\mS = \tilde{\mH}_t$,
\begin{equation}
\begin{aligned}
    &(\vtheta_m - \vtheta_t)^{\top}\mH_t(\vtheta_m - \vtheta_t)\\
    &= \Delta_t^{\top}\tilde{\mH}_t\Delta_t
    - 2\,\Delta_t^{\top}\mS^{\top}\mH_t\boldsymbol{\delta}_t^{\perp}
    + (\boldsymbol{\delta}_t^{\perp})^{\top}\mH_t\boldsymbol{\delta}_t^{\perp}.
\end{aligned}
\end{equation}
The first and third terms are exactly the per-task contributions to $\gV_S$ and $\gR_S$. The cross term we bound via two applications of Cauchy--Schwarz. First, the $\mH_t$-Cauchy--Schwarz inequality (valid because $\mH_t\succeq 0$),
\begin{equation}
\begin{aligned}
    &\bigl|\Delta_t^{\top}\mS^{\top}\mH_t\boldsymbol{\delta}_t^{\perp}\bigr|=\\
    &\bigl|(\mS\Delta_t)^{\top}\mH_t\boldsymbol{\delta}_t^{\perp}\bigr|
    \leq \sqrt{\Delta_t^{\top}\tilde{\mH}_t\Delta_t}\;\sqrt{(\boldsymbol{\delta}_t^{\perp})^{\top}\mH_t\boldsymbol{\delta}_t^{\perp}}.
\end{aligned}
\end{equation}
Multiplying by $\lambda_t$, summing over $t$, and applying the standard Cauchy--Schwarz inequality on the resulting sum,
\begin{equation}
\begin{aligned}
    \sum_{t=1}^T\lambda_t \bigl|\Delta_t^{\top}\mS^{\top}\mH_t\boldsymbol{\delta}_t^{\perp}\bigr|
    &\leq \sum_{t=1}^T \sqrt{\lambda_t\,\Delta_t^{\top}\tilde{\mH}_t\Delta_t}\;\sqrt{\lambda_t\,(\boldsymbol{\delta}_t^{\perp})^{\top}\mH_t\boldsymbol{\delta}_t^{\perp}} \\
    &\leq \sqrt{\sum_{t=1}^T \lambda_t\,\Delta_t^{\top}\tilde{\mH}_t\Delta_t}\,\sqrt{\sum_{t=1}^T \lambda_t\,(\boldsymbol{\delta}_t^{\perp})^{\top}\mH_t\boldsymbol{\delta}_t^{\perp}} \\
    &= \sqrt{\gV_S}\,\sqrt{\gR_S}.
\end{aligned}
\end{equation}
Combining the all three terms above, we have
\begin{equation}
\begin{aligned}
    \sum_{t=1}^T \lambda_t (\vtheta_m - \vtheta_t)^{\top}\mH_t(\vtheta_m - \vtheta_t)
    &\leq \gV_S + 2\sqrt{\gV_S}\sqrt{\gR_S} + \gR_S \\
    &= \bigl(\sqrt{\gV_S} + \sqrt{\gR_S}\bigr)^2.
\end{aligned}
\end{equation}
Plugging this into the weighted sum of~\eqref{eq: per-task-quadratic} and absorbing the per-task remainders into $R_3 = \sum_{t}\lambda_t R_3^{(t)} = \gO(\max_t \|\vtheta_m - \vtheta_t\|^3)$ yields
\begin{equation}
    \sum_{t=1}^T\lambda_t\bigl[\gL_t(\vtheta_m) - \gL_t(\vtheta_t)\bigr]
    \leq \tfrac{1}{2}\bigl(\sqrt{\gV_S} + \sqrt{\gR_S}\bigr)^2 + R_3,
\end{equation}
which is~\eqref{eq: error-bound}.
\end{proof}

\paragraph{Remark.} Equality in the bound is attained when, for every $t$ with $\lambda_t > 0$, the parallel error $\mS\Delta_t$ and the residual $\boldsymbol{\delta}_t^{\perp}$ are co-linear under the metric $\mH_t$ and the per-task ratios $\sqrt{\Delta_t^{\top}\tilde{\mH}_t\Delta_t} / \sqrt{(\boldsymbol{\delta}_t^{\perp})^{\top}\mH_t\boldsymbol{\delta}_t^{\perp}}$ are constant. The bound is therefore tight, not just an order-of-magnitude estimate.

\subsubsection{Proof of Theorem~\ref{thm: curvature-advantage} (Curvature Advantage)}

\begin{proof}
Let $\vb \triangleq \sum_{t=1}^T \lambda_t \tilde{\mH}_t \tilde{\boldsymbol{\delta}}_t$, so the curvature-aware Fr\'echet mean of \cref{thm: existence} is $\tilde{\boldsymbol{\delta}}_H = \bar{\mH}^{-1}\vb$. From the quadratic expansion in \eqref{eq: frechet-quadratic},
\begin{equation}
    \gF(\tilde{\boldsymbol{\delta}}_H)
    = \vb^{\top}\bar{\mH}^{-1}\bar{\mH}\,\bar{\mH}^{-1}\vb - 2\vb^{\top}\bar{\mH}^{-1}\vb + C
    = -\vb^{\top}\bar{\mH}^{-1}\vb + C,
    \label{eq: F-at-H}
\end{equation}
where $C \triangleq \sum_t \lambda_t \tilde{\boldsymbol{\delta}}_t^{\top}\tilde{\mH}_t \tilde{\boldsymbol{\delta}}_t$ is independent of $\tilde{\boldsymbol{\delta}}$. Substituting the flat-geometry choice $\tilde{\boldsymbol{\delta}}_I = \bar{\boldsymbol{\delta}} = \sum_t\lambda_t\tilde{\boldsymbol{\delta}}_t$,
\begin{equation}
    \gF(\tilde{\boldsymbol{\delta}}_I)
    = \bar{\boldsymbol{\delta}}^{\top}\bar{\mH}\,\bar{\boldsymbol{\delta}} - 2\bar{\boldsymbol{\delta}}^{\top}\vb + C.
    \label{eq: F-at-I}
\end{equation}
Subtracting~\eqref{eq: F-at-H} from~\eqref{eq: F-at-I},
\begin{equation}
\begin{aligned}
    \gF(\tilde{\boldsymbol{\delta}}_I) - \gF(\tilde{\boldsymbol{\delta}}_H)
    &= \bar{\boldsymbol{\delta}}^{\top}\bar{\mH}\,\bar{\boldsymbol{\delta}} - 2\bar{\boldsymbol{\delta}}^{\top}\vb + \vb^{\top}\bar{\mH}^{-1}\vb \\
    &= \bigl(\bar{\mH}\,\bar{\boldsymbol{\delta}} - \vb\bigr)^{\top}\bar{\mH}^{-1}\bigl(\bar{\mH}\,\bar{\boldsymbol{\delta}} - \vb\bigr),
\end{aligned}
\end{equation}
where the second equality is the identity $\vx^{\top}\mA\vx - 2\vx^{\top}\vy + \vy^{\top}\mA^{-1}\vy = (\mA\vx - \vy)^{\top}\mA^{-1}(\mA\vx - \vy)$ valid for any $\mA\succ 0$.

It remains to show that $\bar{\mH}\,\bar{\boldsymbol{\delta}} - \vb = -\vc$ where $\vc = \sum_t \lambda_t (\tilde{\mH}_t - \bar{\mH})(\tilde{\boldsymbol{\delta}}_t - \bar{\boldsymbol{\delta}})$. Expanding $\vc$ and using $\sum_t\lambda_t = 1$,
\begin{equation}
\begin{aligned}
    \vc
    &= \sum_{t=1}^T \lambda_t \tilde{\mH}_t\tilde{\boldsymbol{\delta}}_t
     - \Bigl(\sum_{t=1}^T \lambda_t \tilde{\mH}_t\Bigr)\bar{\boldsymbol{\delta}}
     - \bar{\mH}\sum_{t=1}^T \lambda_t \tilde{\boldsymbol{\delta}}_t
     + \bar{\mH}\,\bar{\boldsymbol{\delta}}\sum_{t=1}^T\lambda_t \\
    &= \vb - \bar{\mH}\,\bar{\boldsymbol{\delta}} - \bar{\mH}\,\bar{\boldsymbol{\delta}} + \bar{\mH}\,\bar{\boldsymbol{\delta}}
     = \vb - \bar{\mH}\,\bar{\boldsymbol{\delta}}.
\end{aligned}
\end{equation}
Hence $\bar{\mH}\,\bar{\boldsymbol{\delta}} - \vb = -\vc$, and the quadratic form is invariant under sign,
\begin{equation}
    \gF(\tilde{\boldsymbol{\delta}}_I) - \gF(\tilde{\boldsymbol{\delta}}_H)
    = \vc^{\top}\bar{\mH}^{-1}\vc \;\geq\; 0,
\end{equation}
where non-negativity follows from $\bar{\mH}^{-1}\succ 0$. Equality holds if and only if $\vc = \vzero$, \emph{i.e.~iff the curvature deviations and task-vector deviations are uncorrelated under the weighting $\{\lambda_{t}\mid{t=1,\ldots,T}\}$}.
\end{proof}

\paragraph{Remark.} Three sufficient conditions for $\vc = \vzero$ are immediate: \emph{(a)} all $\tilde{\mH}_t$ are equal (so $\tilde{\mH}_t - \bar{\mH} = \vzero$); \emph{(b)} all $\tilde{\boldsymbol{\delta}}_t$ are equal (so $\tilde{\boldsymbol{\delta}}_t - \bar{\boldsymbol{\delta}} = \vzero$); \emph{(c)} the first moment of the deviations $(\tilde{\mH}_t - \bar{\mH})(\tilde{\boldsymbol{\delta}}_t - \bar{\boldsymbol{\delta}})$ vanishes by symmetry. Cases (a) and (b) are degenerate (homogeneous curvature, identical task vectors); case (c) is the empirically interesting one and is precisely what the diagnostic $\eta = \vc^{\top}\bar{\mH}^{-1}\vc$ measures, evaluated in the projected $p\times p$ space at $\gO(p^3)$ cost.

\subsubsection{Proof of Proposition~\ref{prop: subsumption} (Subsumptions)}

\begin{proof}
We verify each of the five cases by direct substitution into the closed-form Fr\'echet mean
\begin{equation}
    \tilde{\boldsymbol{\delta}}_m^*
    = \Bigl(\sum_{t=1}^T \lambda_t \tilde{\mH}_t\Bigr)^{-1}\!\Bigl[\sum_{t=1}^T \lambda_t \tilde{\mH}_t \tilde{\boldsymbol{\delta}}_t\Bigr],
    \quad \vtheta_m = \vtheta_0 + \mS\,\tilde{\boldsymbol{\delta}}_m^*,
    \label{eq: appx-subspace-frechet}
\end{equation}
under the convention $\sum_{t=1}^T \lambda_t = 1$.

\paragraph{Case 1 (Task Arithmetic).} Set $\mS = \mI_m$ and $\tilde{\mH}_t = \mI_m$. Then $\tilde{\boldsymbol{\delta}}_t = \boldsymbol{\delta}_t$ and~\eqref{eq: appx-subspace-frechet} reduces to
\begin{equation}
    \tilde{\boldsymbol{\delta}}_m^* = \Bigl(\sum_t \lambda_t\Bigr)^{-1}\sum_t\lambda_t\boldsymbol{\delta}_t = \sum_t \lambda_t\boldsymbol{\delta}_t,
\end{equation}
so $\vtheta_m = \vtheta_0 + \sum_t \lambda_t \boldsymbol{\delta}_t$, which is exactly the Task Arithmetic update~\cite{ilharco2023editingmodelstaskarithmetic}.

\paragraph{Case 2 (Fisher Averaging).} Set $\mS = \mI_m$ and $\tilde{\mH}_t = \text{diag}(\mF_t)$. Both $\sum_t \lambda_t \text{diag}(\mF_t)$ and $\sum_t \lambda_t \text{diag}(\mF_t) \boldsymbol{\delta}_t$ are coordinate-wise diagonal, so~\eqref{eq: appx-subspace-frechet} gives
\begin{equation}
    (\tilde{\boldsymbol{\delta}}_m^*)_i
    = \frac{\sum_{t=1}^T \lambda_t F_{t,i}\,\delta_{t,i}}{\sum_{t=1}^T \lambda_t F_{t,i}}, \qquad i=1,\ldots,m,
\end{equation}
which is the per-coordinate Fisher-weighted average of the task vectors~\cite{NEURIPS2022_70c26937}.

\paragraph{Case 3 (Daheim et al.).} Set $\mS = \mI_m$ and $\tilde{\mH}_t = \mH_t$ (full Hessian, possibly approximated). Then
\begin{equation}
    \tilde{\boldsymbol{\delta}}_m^* = \Bigl(\sum_t \lambda_t \mH_t\Bigr)^{-1}\sum_t\lambda_t\mH_t\boldsymbol{\delta}_t,
\end{equation}
which yields the full-space curvature-aware merge of~\cite{daheim2024modelmerginguncertaintybasedgradient}.

\paragraph{Case 4 (TSV-M, projection scaffold).} Let $\mS$ be the orthonormal basis for the top-$k$ left singular subspace of the stacked task vectors $[\boldsymbol{\delta}_1,\ldots,\boldsymbol{\delta}_T]$ (the joint-SVD construction of \cite{gargiulo2025tasksingularvectorsreducing}), and set $\tilde{\mH}_t = \mI_p$. Then~\eqref{eq: appx-subspace-frechet} simplifies to $\tilde{\boldsymbol{\delta}}_m^* = \sum_t \lambda_t \tilde{\boldsymbol{\delta}}_t = \sum_t \lambda_t \mS^{\top}\boldsymbol{\delta}_t$, and lifting back to ambient space yields
\begin{equation}
    \vtheta_m = \vtheta_0 + \mS\mS^{\top}\Bigl(\sum_t \lambda_t \boldsymbol{\delta}_t\Bigr),
    \label{eq: appx-tsvm-projection}
\end{equation}
that is, Task Arithmetic projected onto the rank-$k$ task-singular subspace. This captures the \emph{subspace-projection scaffold} underlying TSV-M. Algorithm~1 of \cite{gargiulo2025tasksingularvectorsreducing} additionally performs a per-layer orthogonal Procrustes whitening of the stacked per-task singular vectors.
This decorrelation is nonlinear in $\{\boldsymbol{\delta}_t\mid{t=1,\ldots,T}\}$ and therefore lies outside the quadratic Fr\'echet template of~\eqref{eq: appx-subspace-frechet}. Accordingly, our hierarchy subsumes TSV-M's curvature-agnostic subspace-projection choice but leaves the Procrustes decorrelation as an orthogonal enhancement. Empirically we still compare against Gargiulo's complete algorithm in~\cref{sec: experiment}.

\paragraph{Case 5 (EpiMer).} EpiMer adopts the same per-task factored basis as TSV-M (\eqref{eq: subspace-basis}), but replaces the identity metric $\tilde{\mH}_t = \mI_p$ of Case~4 with the projected curvature $\tilde{\mH}_t = \mS^{\top}\mH_t\mS$, where $\mS$ collects the vec'd rank-1 atoms $\{\mU_{\perp,i}\mV_{\perp,i}^{\top}\}$ as columns. Substituting this curvature-aware metric into the quadratic Fr\'echet template~\eqref{eq: appx-subspace-frechet} recovers \eqref{eq: subspace-frechet-mean} verbatim. On the per-task tagged basis, however, the Fr\'echet \emph{mean} can suffer from a magnitude collapse pathology because each rank-1 atom is owned by exactly one task. In practice, our proposed EpiMer in~\cref{subsubsec: aggregator} uses the matrix-weighted sum aggregator of~\eqref{eq: frechet-sum}, which is exactly $\alpha T$ times the Fr\'echet mean and reshapes the per-task contributions through the same curvature solve. In the homogeneous-curvature limit $\tilde{\mH}_t \equiv \tilde{\mH}$, \eqref{eq: frechet-sum} collapses to $\alpha\sum_t\tilde{\boldsymbol{\delta}}_t$, recovering TSV-M aggregation. Nevertheless, whenever the per-task curvatures differ, our matrix solve can yield a strictly different and empirically better merge. See~\cref{appx: alpha-sweep} for more detailed comparison.

The above shows that in each case the existing method
is recovered by an appropriate $(\mS,\tilde{\mH}_t)$. This completes the proof.
\end{proof}


\begin{table*}[!t]
\centering
\caption{Per-task top-$1$ accuracy on the eight image classification tasks, expanded from \cref{tab: performance}. Best among merging methods is \textbf{bold}, second best is \underline{underlined}. All baselines use $\lambda_t = 1/T$, so AM and TA collapse to the same merged delta and are listed jointly. TSV-M and EpiMer are reported at $k=32$ and the per-method optimal global rescaling $\alpha$ from~\cref{subsec: alpha-sweep-main}: TSV-M uses $\alpha=0.70$ on ViT-B/32 and $\alpha=1.0$ on ViT-B/16 and ViT-L/14; EpiMer uses $\alpha=1.0$ on all three backbones. On ViT-L/14, the average accuracy of TSV-M and EpiMer differ by $0.06$ percentage points (EpiMer $0.9065$ vs TSV-M $0.9059$); both display as $.906$ at the table's three-decimal precision.}
\resizebox{\textwidth}{!}{%
\begin{tabular}{@{}ll|ccccccccc@{}}
\toprule
\textbf{Backbone} & \textbf{Methods} & \textbf{Cars} & \textbf{DTD} & \textbf{EuroSAT} & \textbf{GTSRB} & \textbf{MNIST} & \textbf{RESISC45} & \textbf{SUN397} & \textbf{SVHN} & \textbf{Avg.} \\
\midrule
\multicolumn{11}{l}{\textit{Baselines}} \\
\multirow{2}{*}{ViT-B/32} & Linear-Probing & .597 & .436 & .451 & .326 & .482 & .602 & .631 & .316 & .480 \\
& Fine-tuning      & .806 & .791 & .999 & .990 & .997 & .961 & .755 & .975 & .909 \\
\multirow{2}{*}{ViT-B/16} & Linear-Probing & .647 & .441 & .544 & .434 & .517 & .663 & .655 & .520 & .553 \\
& Fine-tuning      & .891 & .822 & .999 & .992 & .998 & .968 & .785 & .978 & .929 \\
\multirow{2}{*}{ViT-L/14} & Linear-Probing & .779 & .549 & .613 & .507 & .763 & .713 & .682 & .584 & .649 \\
& Fine-tuning      & .934 & .839 & .999 & .993 & .997 & .973 & .824 & .981 & .943 \\
\midrule
\multicolumn{11}{l}{\textit{Flat-geometry methods}} \\
\multirow{3}{*}{ViT-B/32} & AM~\cite{Wortsman_2022_CVPR} / TA~\cite{ilharco2023editingmodelstaskarithmetic}   & .575 & .506 & .729 & .529 & .871 & .715 & .650 & .647 & .653 \\
                           & TIES~\cite{NEURIPS2023_1644c9af}                                                 & .572 & .532 & .871 & .741 & .983 & .696 & .569 & .835 & .725 \\
                           & TSV-M~\cite{gargiulo2025tasksingularvectorsreducing}                             & .675 & .657 & .959 & .881 & .990 & .835 & .674 & .903 & \underline{.822} \\
\cmidrule{2-11}
\multirow{3}{*}{ViT-B/16} & AM~\cite{Wortsman_2022_CVPR} / TA~\cite{ilharco2023editingmodelstaskarithmetic}   & .672 & .504 & .760 & .599 & .941 & .759 & .677 & .765 & .710 \\
                           & TIES~\cite{NEURIPS2023_1644c9af}                                                 & .693 & .559 & .838 & .804 & .989 & .778 & .637 & .896 & .774 \\
                           & TSV-M~\cite{gargiulo2025tasksingularvectorsreducing}                             & .802 & .711 & .977 & .937 & .994 & .860 & .686 & .955 & \underline{.865} \\
\cmidrule{2-11}
\multirow{3}{*}{ViT-L/14} & AM~\cite{Wortsman_2022_CVPR} / TA~\cite{ilharco2023editingmodelstaskarithmetic}   & .797 & .619 & .901 & .713 & .967 & .826 & .715 & .786 & .791 \\
                           & TIES~\cite{NEURIPS2023_1644c9af}                                                 & .845 & .675 & .950 & .907 & .991 & .873 & .737 & .890 & .859 \\
                           & TSV-M~\cite{gargiulo2025tasksingularvectorsreducing}                             & .899 & .773 & .985 & .965 & .995 & .909 & .762 & .960 & \underline{.906} \\
\midrule
\multicolumn{11}{l}{\textit{Curvature-aware methods}} \\
\multirow{2}{*}{ViT-B/32} & Fisher~\cite{NEURIPS2022_70c26937}  & .751 & .598 & .409 & .338 & .542 & .620 & .707 & .351 & .539 \\
                           & EpiMer                     & .672 & .694 & .965 & .914 & .993 & .834 & .665 & .926 & \textbf{.833} \\
\cmidrule{2-11}
\multirow{2}{*}{ViT-B/16} & Fisher~\cite{NEURIPS2022_70c26937}  & .824 & .610 & .503 & .408 & .678 & .688 & .740 & .547 & .625 \\
                           & EpiMer                     & .804 & .722 & .977 & .944 & .993 & .869 & .710 & .941 & \textbf{.870} \\
\cmidrule{2-11}
\multirow{2}{*}{ViT-L/14} & Fisher~\cite{NEURIPS2022_70c26937}  & .874 & .760 & .646 & .507 & .755 & .760 & .788 & .671 & .720 \\
                           & EpiMer                     & .901 & .771 & .984 & .973 & .995 & .914 & .763 & .949 & \textbf{.906} \\
\bottomrule
\end{tabular}}
\label{tab: performance-full}
\end{table*}

\section{Implementation Details for Experiments}
\label[appendix]{appx: implementation}

\subsection{Datasets}
\label[appendix]{appx-subsec: datasets}

This section presents more details about the datasets used in our comparative experiments. Following prior works~\cite{Wortsman_2022_CVPR,pmlr-v162-wortsman22a,ilharco2023editingmodelstaskarithmetic}, we evaluated model merging on eight image classification tasks:
\begin{itemize}
    \item \textbf{Stanford Cars (Cars)~\cite{KrauseStarkDengFei-Fei_3DRR2013}} is an image classification dataset with $16,185$ images of $196$ classes of cars. The dataset is split into $8,144$ training images and $8,041$ testing images. Each class appears with the exact frequency in the training and testing sets.
    \item \textbf{Describable Texture Dataset (DTD)~\cite{cimpoi14describing}} comprises of $3,760$ training images and $1,880$ testing image.
    Each image is labeled with one of $47$ describable textures.
    \item \textbf{EuroSAT~\cite{helber2019eurosatnoveldatasetdeep}} is a dataset for land use classification with $27,000$ labeled Sentinel-2 satellite images. The dataset is split into $21,000$ training and $6,000$ testing images.
    \item \textbf{German Traffic Sign Recognition Benchmark Dataset (GTSRB)~\cite{6033395}} contains $39,270$ images with $43$ classes of traffic signs. The dataset is split into $26,640$ training images and $12,630$ testing images.
    \item \textbf{MNIST~\cite{lecun2010mnist}} is a dataset of $10$ handwritten digits, containing $60,000$ training images and $10,000$ testing images, with balanced presence of each digit in both splits.
    \item \textbf{Remote Sensing Image Scene Classification Dataset (RESISC45)~\cite{Cheng_2017}} is a remote sensing image classification dataset with $25,200$ images of $45$ classes of scenarios. The dataset is split into $18,900$ training images and $6,300$ testing images, with around $700$ images per class.
    \item \textbf{Scene Understanding Benchmark (SUN397)~\cite{sun397}} contains $397$ categories of scenes with $108,754$ images in total. The number of images varies across categories, but there are at least $100$ images per category.
    \item \textbf{Street View House Numbers (SVHN)~\cite{netzer2011reading}} is a dataset of $10$ classes of housing number digits extracted from the Google Street View images. The dataset contains $73,257$ training images and $26,032$ testing images.
\end{itemize}

\subsection{Model Checkpoints}
\label[appendix]{appx-subsec: finetuning}

We use the publicly released, per-task fine-tuned CLIP-ViT-B/16, CLIP-ViT-B/32, and CLIP-ViT-L/14 checkpoints from~\cite{ilharco2023editingmodelstaskarithmetic}, which were obtained by linear probing followed by full fine-tuning on each of the eight datasets in~\cref{appx-subsec: datasets}. The checkpoints ship only the converged parameters $\vtheta_{t}$, not the optimizer states; we therefore reconstruct the empirical Fisher diagonal $\vv_{t}$ used by~\eqref{eq: hessian-approx} via a single forward+backward pass over each task's training set at $\vtheta_{t}$. Refer to~\cref{appx: small-data} for more details.

\subsection{Baselines Implementation}
\label[appendix]{appx-subsec: baselines}

For the baseline methods, we implement Arithmetic Mean (AM)~\cite{Wortsman_2022_CVPR}, Task Arithmetic~\cite{ilharco2023editingmodelstaskarithmetic}, Fisher-weighted Averaging~\cite{NEURIPS2022_70c26937}, and TIES-Merging~\cite{NEURIPS2023_1644c9af}. We follow the original papers for the implementation details of these methods. In particular, we use equal merging coefficients $\lambda_{t}=0.125$ across all tasks for Task Arithmetic and TIES-Merging. We keep the top-$20\%$ of the parameters and elect signs by frequency in TIES-Merging. To facilitate a fair comparison, we leverage the square of the first momentum to construct a diagonal Fisher Information Matrix $\mF_{t}\gets\text{diag}(\vu^{2})$ in Fisher-weighted Averaging, following the definitions in the original paper.

\section{Additional Results}
\label[appendix]{appx: results}

\subsection{Per-Task Accuracy Breakdown}
\label[appendix]{appx: performance-full}

\Cref{tab: performance-full} expands~\cref{tab: performance} from the main text with the per-task top-$1$ accuracy on every dataset, for all baselines and EpiMer, across the three CLIP-ViT backbones. Averages in the last column match the main-text table exactly.

\subsection{Global Rescaling \texorpdfstring{$\alpha$}{alpha} Sensitivity}
\label[appendix]{appx: alpha-sweep}

This section reports the full $\alpha$ sweep at $k\in\{4,16,32\}$ on each backbone and supplements the $k=32$ slice that appears in~\cref{subsec: alpha-sweep-main}. We sweep $\alpha\in\{0.20,0.30,0.40,0.50,0.70,1.00\}$ on TSV-M~\cite{gargiulo2025tasksingularvectorsreducing} and the matrix-weighted aggregator of~\eqref{eq: frechet-sum}; \cref{tab: alpha-sweep-b32,tab: alpha-sweep-b16,tab: alpha-sweep-l14} report average accuracy across the eight tasks.


\begin{table}[!t]
    \centering
    \caption{ViT-B/32 $\alpha$-sensitivity sweep. Best per column is \textbf{bold}.}
    \resizebox{\linewidth}{!}{%
    \begin{tabular}{@{}lcccccc@{}}
    \toprule
    & \multicolumn{3}{c}{\textbf{TSV-M}} & \multicolumn{3}{c}{\textbf{EpiMer}} \\
    \cmidrule(lr){2-4} \cmidrule(lr){5-7}
    $\alpha$ & $k=4$ & $k=16$ & $k=32$ & $k=4$ & $k=16$ & $k=32$ \\
    \midrule
    0.20 & .573 & .624 & .630 & .573 & .609 & .601 \\
    0.30 & .627 & .689 & .699 & .633 & .677 & .670 \\
    0.40 & .667 & .737 & .750 & .674 & .727 & .724 \\
    0.50 & .693 & .770 & .787 & .703 & .763 & .764 \\
    0.70 & \textbf{.720} & \textbf{.800} & \textbf{.822} & .728 & .800 & .812 \\
    1.00 & .706 & .788 & .820 & \textbf{.709} & \textbf{.802} & \textbf{.833} \\
    \bottomrule
    \end{tabular}}
    \label{tab: alpha-sweep-b32}
\end{table}

\begin{table}[!t]
    \centering
    \caption{ViT-B/16 $\alpha$-sensitivity sweep. Best per column is \textbf{bold}.}
    \resizebox{\linewidth}{!}{%
    \begin{tabular}{@{}lcccccc@{}}
    \toprule
    & \multicolumn{3}{c}{\textbf{TSV-M}} & \multicolumn{3}{c}{\textbf{EpiMer}} \\
    \cmidrule(lr){2-4} \cmidrule(lr){5-7}
    $\alpha$ & $k=4$ & $k=16$ & $k=32$ & $k=4$ & $k=16$ & $k=32$ \\
    \midrule
    0.20 & .636 & .681 & .688 & .638 & .672 & .666 \\
    0.30 & .676 & .733 & .747 & .680 & .725 & .724 \\
    0.40 & .703 & .774 & .792 & .709 & .766 & .771 \\
    0.50 & .725 & .803 & .822 & .730 & .796 & .804 \\
    0.70 & \textbf{.755} & .834 & .857 & \textbf{.760} & .832 & .846 \\
    1.00 & .754 & \textbf{.836} & \textbf{.865} & .751 & \textbf{.843} & \textbf{.870} \\
    \bottomrule
    \end{tabular}}
    \label{tab: alpha-sweep-b16}
\end{table}

\begin{table}[!t]
    \centering
    \caption{ViT-L/14 $\alpha$-sensitivity sweep. Best per column is \textbf{bold}.}
    \resizebox{\linewidth}{!}{%
    \begin{tabular}{@{}lcccccc@{}}
    \toprule
    & \multicolumn{3}{c}{\textbf{TSV-M}} & \multicolumn{3}{c}{\textbf{EpiMer}} \\
    \cmidrule(lr){2-4} \cmidrule(lr){5-7}
    $\alpha$ & $k=4$ & $k=16$ & $k=32$ & $k=4$ & $k=16$ & $k=32$ \\
    \midrule
    0.20 & .730 & .763 & .772 & .733 & .761 & .766 \\
    0.30 & .759 & .801 & .816 & .764 & .801 & .808 \\
    0.40 & .783 & .830 & .849 & .789 & .830 & .841 \\
    0.50 & .801 & .850 & .870 & .809 & .849 & .863 \\
    0.70 & .826 & .875 & .895 & .832 & .875 & .890 \\
    1.00 & \textbf{.837} & \textbf{.887} & \textbf{.906} & \textbf{.842} & \textbf{.890} & \textbf{.906} \\
    \bottomrule
    \end{tabular}}
    \label{tab: alpha-sweep-l14}
\end{table}

Beyond the $k=32$ findings reported in~\cref{subsec: alpha-sweep-main}, the lower-rank slices in \cref{tab: alpha-sweep-b32,tab: alpha-sweep-b16,tab: alpha-sweep-l14} establish that EpiMer dominates TSV-M at every rank tested at the per-method optimum: by $+0.79$, $+0.20$, $+1.10$ percentage points at $k=4,16,32$ on ViT-B/32; by $+0.52$, $+0.66$, $+0.48$ percentage points on ViT-B/16; and by $+0.49$, $+0.25$, $+0.06$ percentage points on ViT-L/14. The L/14 margin shrinks at high rank because the per-task curvatures become more uniform on the larger backbone, pushing~\eqref{eq: frechet-sum} toward its homogeneous-curvature limit (TSV-M); even there, EpiMer remains strictly ahead.

\subsection{Loss Landscape Visualizations}
\label[appendix]{appx: loss-landscape}

To support the connected-basin assumption (Assumption~\ref{asm: connected-basin}) used by EpiMer, we visualize the loss landscape of fine-tuned CLIP-ViT models around their fine-tuned positions. Following prior works~\cite{NEURIPS2018_a41b3bb3,pmlr-v222-ding24a}, we apply PCA on the matrix $\bm{\Theta}$ that stacks the parameter vectors of all fine-tuned models for a given backbone, take the top two principal directions $\delta,\eta$, and evaluate the per-task loss on the grid $\vtheta = \bar{\vtheta} + \alpha\delta + \beta\eta$, where $\bar{\vtheta}$ is the mean of all fine-tuned models. \Cref{fig: clip-vit-b16 additional landscape} reports this visualization for CLIP-ViT-B/16 on DTD, EuroSAT, GTSRB, and SUN397, all of which exhibit a low-change-of-loss basin around the fine-tuned model with high-loss barriers between the per-task minima---qualitatively consistent with Assumption~\ref{asm: connected-basin}.

\begin{figure*}[!t]
    \centering
    \begin{subfigure}[b]{0.48\textwidth}
        \includegraphics[width=\textwidth]{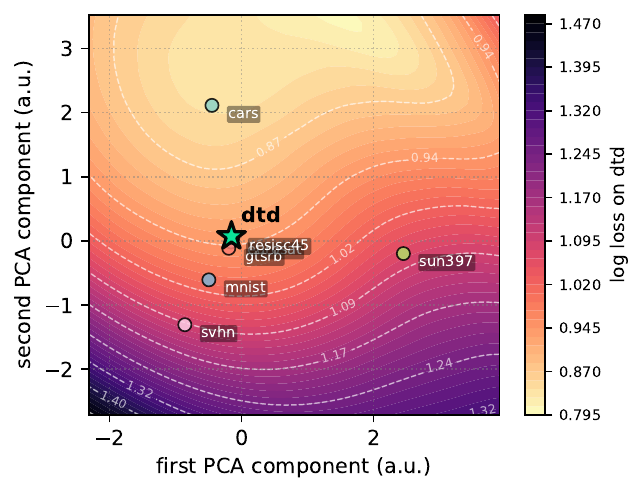}
        \caption{CLIP-ViT-B/16 on DTD.}
    \end{subfigure}
    \hfill 
    \begin{subfigure}[b]{0.48\textwidth}
        \includegraphics[width=\textwidth]{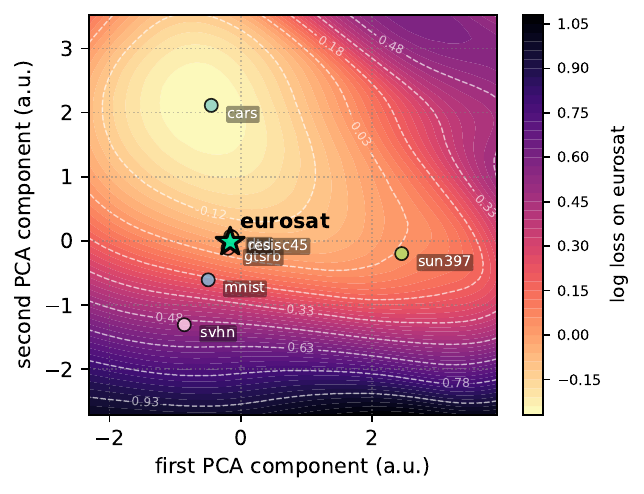}
        \caption{CLIP-ViT-B/16 on EuroSAT.}
    \end{subfigure}
    \vfill
    \begin{subfigure}[b]{0.48\textwidth}
        \includegraphics[width=\textwidth]{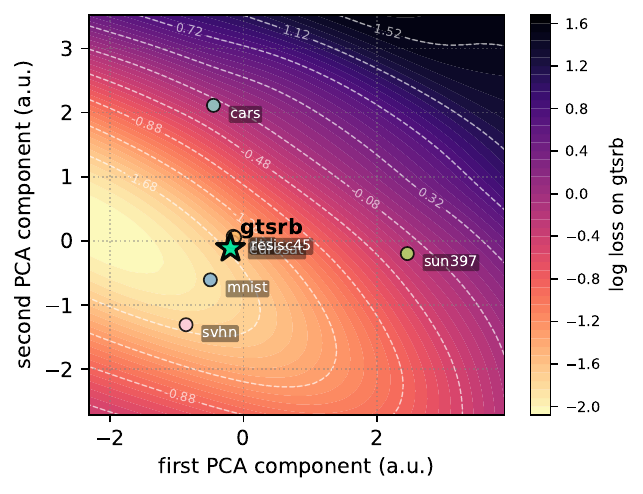}
        \caption{CLIP-ViT-B/16 on GTSRB.}
    \end{subfigure}
    \hfill
    \begin{subfigure}[b]{0.48\textwidth}
        \includegraphics[width=\textwidth]{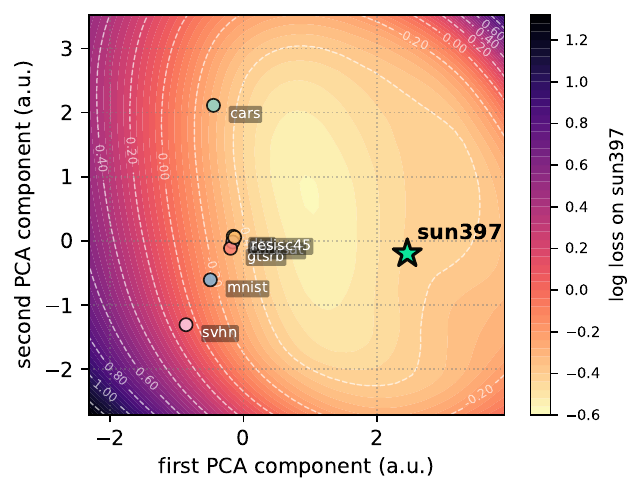}
        \caption{CLIP-ViT-B/16 on SUN397.}
    \end{subfigure}
    \caption{Loss landscape visualization for CLIP-ViT-B/16 fine-tuned on (a) DTD, (b) EuroSAT, (c) GTSRB, and (d) SUN397 datasets.}
    \label{fig: clip-vit-b16 additional landscape}
\end{figure*}

\subsection{Data-Efficiency for Empirical Fisher Diagonal}
\label[appendix]{appx: small-data}

This section reports the full setup used by the empirical Fisher robustness study summarized in~\cref{subsec: data-efficiency}; the headline figure is~\cref{fig: fisher subsample} in the main paper.

\paragraph{Setup.} For each backbone and each subsample fraction $f\in\{0.5\%, 1\%, 5\%, 10\%, 25\%, 50\%, 100\%\}$ we shuffle the training set with a fixed seed, take the prefix of $\lfloor f\cdot N_{t}/B\rfloor$ batches with batch size $B=64$, and accumulate the per-sample squared gradients on that prefix to obtain $\vv_{t}^{(f)}$. The per-sample gradients are materialized in chunks of $8$ samples via \texttt{jax.vmap} so that peak memory remains $\gO(\text{chunk}\cdot|\vtheta|)$ rather than $\gO(B\cdot|\vtheta|)$. We then plug $\{\vv_{t}^{(f)}\}_{t=1}^{8}$ into EpiMer at the optimum with $k=32$, $\alpha=1.0$, and \texttt{frechet\_sum} aggregator from~\cref{appx: alpha-sweep} to evaluate the merged backbone on all eight tasks. The same per-task classifier heads are used as in~\cref{tab: performance}.

\subsection{Computational Cost}
\label[appendix]{appx: computational-cost}

\begin{table}[!t]
    \centering
    \caption{End-to-end merge wall-clock time at the headline setting $k=32$ on a single RTX~3090~Ti, reported as the median of at least four repeat runs on the eight-task CLIP-ViT benchmark. AM/TA, TIES, and Fisher-weighted Averaging operate in the full parameter space and do not trigger the spectral JIT path. TSV-M and EpiMer both construct the per-task tagged basis of~\cref{eq: subspace-basis} and therefore share the JIT-compilation cost of the per-leaf SVD kernel; EpiMer's per-leaf $p\times p$ matrix solve with $p=kT=256$ adds only $3.7$--$11.1$~seconds on top of that shared baseline. The per-task empirical Fisher diagonal used by Fisher Averaging and EpiMer is accumulated once per backbone and amortized across every subsequent merge attempt.}
    \label{tab: computational-cost}
    \resizebox{\linewidth}{!}{%
    \begin{tabular}{@{}lccc@{}}
    \toprule
    \textbf{Method} & \textbf{ViT-B/32} & \textbf{ViT-B/16} & \textbf{ViT-L/14} \\
    \midrule
    AM / Task Arithmetic          & $0.9$\,s  & $0.9$\,s  & $1.3$\,s   \\
    TIES-Merging                  & $3.3$\,s  & $3.4$\,s  & $6.1$\,s   \\
    Fisher-weighted Averaging     & $1.7$\,s  & $1.8$\,s  & $3.8$\,s   \\
    TSV-M                         & $34.2$\,s & $34.2$\,s & $108.7$\,s \\
    \textbf{EpiMer}      & $37.9$\,s & $39.0$\,s & $119.8$\,s \\
    \bottomrule
    \end{tabular}}
\end{table}

\Cref{tab: computational-cost} reports the end-to-end merge wall-clock time at the headline setting $k=32$. The non-spectral baselines (AM/TA, TIES, Fisher Averaging) complete in under seven seconds on every backbone because they operate in the full parameter space with simple per-coordinate aggregation. TSV-M and EpiMer instead pay a fixed per-backbone JIT-compilation cost for the per-leaf SVD kernel that builds the per-task tagged basis $\gB^{(\ell)}$ from~\eqref{eq: subspace-basis}---${\sim}34$~seconds on ViT-B/32 and ViT-B/16, and ${\sim}109$~seconds on ViT-L/14, consistent with L/14's ${\sim}3{\times}$ larger parameter count. On top of that shared floor, EpiMer's per-leaf $p\times p$ matrix solve in~\eqref{eq: frechet-sum} with $p=kT=256$ at $k=32, T=8$ adds only $3.7$, $4.8$, and $11.1$~seconds over TSV-M on ViT-B/32, ViT-B/16, and ViT-L/14 respectively---at most $14\%$ of the corresponding TSV-M baseline. The per-task empirical Fisher diagonal $\vv_{t}$ needs one forward+backward pass per batch, is robust to a tiny data fraction (\cref{subsec: data-efficiency}), and is shared with Fisher-weighted Averaging, so it is amortized once per backbone across all downstream merge attempts and does not enter the headline merge-time comparison.

    {
        \small
        \putbib[references/ieee_abbr,references/juanwu]
    }
\end{bibunit}

\end{document}